\newcommand{\norm}[1]{\left\lVert#1\right\rVert}
\newcommand{\vbar}{\,|\,}
\newtheorem{theorem}{Theorem}[section]
\newtheorem{assumption}[theorem]{Assumption}
\newacronym{BO}{bo}{Bayesian optimization}
\newacronym{GP}{gp}{Gaussian Process}
\newacronym{TR}{tr}{Trust Regions}
\newacronym{PB2}{pb2}{Population Based Bandit}
\newacronym{RL}{rl}{Reinforcement Learning}
\newacronym{PBT}{pbt}{Population-based Training}
\newacronym{BG-PBT}{bg-pbt}{Bayesian Generational Population-based Training}
\newacronym{PPO}{ppo}{Proximal Policy Optimization}
\newacronym{UCB}{ucb}{Upper Confidence Bound}
\newacronym{RS}{rs}{Random Search}
\newacronym{MDP}{mdp}{Markov Decision Process}
\newacronym{MLP}{mlp}{Multi-layer Perceptron}
\newacronym{AutoRL}{AutoRL}{Automated Reinforcement Learning}
\newacronym{NAS}{nas}{Neural Architecture Search}
\newcommand{\our}{\gls{BG-PBT}\xspace}        %
\newcommand{\ourNAS}{\gls{BG-PBT}\xspace}      %
\title{Bayesian Generational Population-Based Training}
\author[1]{\nameemail{Xingchen Wan}{xwan@robots.ox.ac.uk}}
\author[1]{\nameemail{Cong Lu}{conglu@robots.ox.ac.uk}}
\author[1]{\nameemail{Jack Parker-Holder}{jackph@robots.ox.ac.uk}}
\author[1]{\nameemail{Philip J. Ball}{ball@robots.ox.ac.uk}}
\author[2]{\nameemail{Vu Nguyen}{vu@ieee.org}}
\author[1]{\nameemail{Binxin Ru}{robin@robots.ox.ac.uk}}
\author[1]{\nameemail{Michael A. Osborne}{mosb@robots.ox.ac.uk}}
\affil[1]{Machine Learning Research Group, University of Oxford, Oxford, UK}
\affil[2]{Amazon, Adelaide, Australia}
\begin{document}
\global\long\def\se{\hat{\text{se}}}%

\global\long\def\interior{\text{int}}%

\global\long\def\boundary{\text{bd}}%

\global\long\def\new{\text{*}}%

\global\long\def\stir{\text{Stirl}}%

\global\long\def\dist{d}%

\global\long\def\HX{\entro\left(X\right)}%
 
\global\long\def\entropyX{\HX}%

\global\long\def\HY{\entro\left(Y\right)}%
 
\global\long\def\entropyY{\HY}%

\global\long\def\HXY{\entro\left(X,Y\right)}%
 
\global\long\def\entropyXY{\HXY}%

\global\long\def\mutualXY{\mutual\left(X;Y\right)}%
 
\global\long\def\mutinfoXY{\mutualXY}%

\global\long\def\xnew{y}%

\global\long\def\bx{\mathbf{x}}%

\global\long\def\bh{\mathbf{h}}%

\global\long\def\bw{\mathbf{w}}%

\global\long\def\bz{\mathbf{z}}%

\global\long\def\bu{\mathbf{u}}%

\global\long\def\bs{\boldsymbol{s}}%

\global\long\def\bk{\mathbf{k}}%

\global\long\def\bX{\mathbf{X}}%

\global\long\def\tbx{\tilde{\bx}}%

\global\long\def\by{\mathbf{y}}%

\global\long\def\bY{\mathbf{Y}}%

\global\long\def\bZ{\boldsymbol{Z}}%

\global\long\def\bU{\boldsymbol{U}}%

\global\long\def\bv{\boldsymbol{v}}%

\global\long\def\balpha{\boldsymbol{\alpha}}

\global\long\def\btheta{\boldsymbol{\theta}}

\global\long\def\bn{\boldsymbol{n}}%

\global\long\def\bV{\boldsymbol{V}}%

\global\long\def\bK{\boldsymbol{K}}%

\global\long\def\bbeta{\gvt{\beta}}%

\global\long\def\bmu{\gvt{\mu}}%

\global\long\def\btheta{\boldsymbol{\theta}}%

\global\long\def\blambda{\boldsymbol{\lambda}}%

\global\long\def\bgamma{\boldsymbol{\gamma}}%

\global\long\def\bpsi{\boldsymbol{\psi}}%

\global\long\def\bphi{\boldsymbol{\phi}}%

\global\long\def\bpi{\boldsymbol{\pi}}%

\global\long\def\eeta{\boldsymbol{\eta}}%

\global\long\def\bomega{\boldsymbol{\omega}}%

\global\long\def\bepsilon{\boldsymbol{\epsilon}}%

\global\long\def\btau{\boldsymbol{\tau}}%

\global\long\def\bSigma{\gvt{\Sigma}}%

\global\long\def\realset{\mathbb{R}}%

\global\long\def\realn{\realset^{n}}%

\global\long\def\integerset{\mathbb{Z}}%

\global\long\def\natset{\integerset}%

\global\long\def\integer{\integerset}%

\global\long\def\natn{\natset^{n}}%

\global\long\def\rational{\mathbb{Q}}%

\global\long\def\rationaln{\rational^{n}}%

\global\long\def\complexset{\mathbb{C}}%

\global\long\def\comp{\complexset}%

\global\long\def\compl#1{#1^{\text{c}}}%

\global\long\def\and{\cap}%

\global\long\def\compn{\comp^{n}}%

\global\long\def\comb#1#2{\left({#1\atop #2}\right) }%

\global\long\def\nchoosek#1#2{\left({#1\atop #2}\right)}%

\global\long\def\param{\vt w}%

\global\long\def\Param{\Theta}%

\global\long\def\meanparam{\gvt{\mu}}%

\global\long\def\Meanparam{\mathcal{M}}%

\global\long\def\meanmap{\mathbf{m}}%

\global\long\def\logpart{A}%

\global\long\def\simplex{\Delta}%

\global\long\def\simplexn{\simplex^{n}}%

\global\long\def\dirproc{\text{DP}}%

\global\long\def\ggproc{\text{GG}}%

\global\long\def\DP{\text{DP}}%

\global\long\def\ndp{\text{nDP}}%

\global\long\def\hdp{\text{HDP}}%

\global\long\def\gempdf{\text{GEM}}%

\global\long\def\ei{\text{EI}}%

\global\long\def\rfs{\text{RFS}}%

\global\long\def\bernrfs{\text{BernoulliRFS}}%

\global\long\def\poissrfs{\text{PoissonRFS}}%

\global\long\def\grad{\gradient}%
 
\global\long\def\gradient{\nabla}%

\global\long\def\cpr#1#2{\Pr\left(#1\ |\ #2\right)}%

\global\long\def\var{\text{Var}}%

\global\long\def\Var#1{\text{Var}\left[#1\right]}%

\global\long\def\cov{\text{Cov}}%

\global\long\def\Cov#1{\cov\left[ #1 \right]}%

\global\long\def\COV#1#2{\underset{#2}{\cov}\left[ #1 \right]}%

\global\long\def\corr{\text{Corr}}%

\global\long\def\sst{\text{T}}%

\global\long\def\SST{\sst}%

\global\long\def\ess{\mathbb{E}}%

\global\long\def\Ess#1{\ess\left[#1\right]}%

\global\long\def\fisher{\mathcal{F}}%

\global\long\def\bfield{\mathcal{B}}%
 
\global\long\def\borel{\mathcal{B}}%

\global\long\def\bernpdf{\text{Bernoulli}}%

\global\long\def\betapdf{\text{Beta}}%

\global\long\def\dirpdf{\text{Dir}}%

\global\long\def\gammapdf{\text{Gamma}}%

\global\long\def\gaussden#1#2{\text{Normal}\left(#1, #2 \right) }%

\global\long\def\gauss{\mathbf{N}}%

\global\long\def\gausspdf#1#2#3{\text{Normal}\left( #1 \lcabra{#2, #3}\right) }%

\global\long\def\multpdf{\text{Mult}}%

\global\long\def\poiss{\text{Pois}}%

\global\long\def\poissonpdf{\text{Poisson}}%

\global\long\def\pgpdf{\text{PG}}%

\global\long\def\wshpdf{\text{Wish}}%

\global\long\def\iwshpdf{\text{InvWish}}%

\global\long\def\nwpdf{\text{NW}}%

\global\long\def\niwpdf{\text{NIW}}%

\global\long\def\studentpdf{\text{Student}}%

\global\long\def\unipdf{\text{Uni}}%

\global\long\def\transp#1{\transpose{#1}}%
 
\global\long\def\transpose#1{#1^{\mathsf{T}}}%

\global\long\def\mgt{\succ}%

\global\long\def\mge{\succeq}%

\global\long\def\idenmat{\mathbf{I}}%

\global\long\def\trace{\mathrm{tr}}%

\global\long\def\argmax#1{\underset{_{#1}}{\text{argmax}} }%

\global\long\def\argmin#1{\underset{_{#1}}{\text{argmin}\ } }%

\global\long\def\diag{\text{diag}}%

\global\long\def\norm{}%

\global\long\def\spn{\text{span}}%

\global\long\def\vtspace{\mathcal{V}}%

\global\long\def\field{\mathcal{F}}%
 
\global\long\def\ffield{\mathcal{F}}%

\global\long\def\inner#1#2{\left\langle #1,#2\right\rangle }%
 
\global\long\def\iprod#1#2{\inner{#1}{#2}}%

\global\long\def\dprod#1#2{#1 \cdot#2}%

\global\long\def\norm#1{\left\Vert #1\right\Vert }%

\global\long\def\entro{\mathbb{H}}%

\global\long\def\entropy{\mathbb{H}}%

\global\long\def\Entro#1{\entro\left[#1\right]}%

\global\long\def\Entropy#1{\Entro{#1}}%

\global\long\def\mutinfo{\mathbb{I}}%

\global\long\def\relH{\mathit{D}}%

\global\long\def\reldiv#1#2{\relH\left(#1||#2\right)}%

\global\long\def\KL{KL}%

\global\long\def\KLdiv#1#2{\KL\left(#1\parallel#2\right)}%
 
\global\long\def\KLdivergence#1#2{\KL\left(#1\ \parallel\ #2\right)}%

\global\long\def\crossH{\mathcal{C}}%
 
\global\long\def\crossentropy{\mathcal{C}}%

\global\long\def\crossHxy#1#2{\crossentropy\left(#1\parallel#2\right)}%

\global\long\def\breg{\text{BD}}%

\global\long\def\lcabra#1{\left|#1\right.}%

\global\long\def\lbra#1{\lcabra{#1}}%

\global\long\def\rcabra#1{\left.#1\right|}%

\global\long\def\rbra#1{\rcabra{#1}}%

\maketitle

\begin{abstract}
Reinforcement learning (RL) offers the potential for training generally capable agents that can interact autonomously in the real world.
However, one key limitation is the brittleness of RL algorithms to core hyperparameters and network architecture choice.
Furthermore, non-stationarities such as evolving training data and increased agent complexity mean that different hyperparameters and architectures may be optimal at different points of training.
This motivates AutoRL, a class of methods seeking to automate these design choices.
One prominent class of AutoRL methods is Population-Based Training (PBT), which have led to impressive performance in several large scale settings.
In this paper, we introduce two new innovations in PBT-style methods.
First, we employ trust-region based Bayesian Optimization, enabling full coverage of the high-dimensional mixed hyperparameter search space.
Second, we show that using a \emph{generational} approach, we can also learn both architectures and hyperparameters jointly on-the-fly in a single training run.
Leveraging the new highly parallelizable Brax physics engine, we show that these innovations lead to large performance gains, significantly outperforming the tuned baseline while learning entire configurations on the fly. Code is available at \url{https://github.com/xingchenwan/bgpbt}.
\end{abstract}

\section{Introduction}
\gls{RL} \citep{Sutton1998} has proven to be a successful paradigm for training agents across a variety of domains and tasks~\citep{mnih-atari-2013, alphago, qtopt, nguyen2021deep}, with some believing it could be enough for training generally capable agents~\citep{rewardenough}.
However, a crucial factor limiting the wider applicability of \gls{RL} to new problems is the notorious sensitivity of algorithms with respect to their hyperparameters~\citep{deeprlmatters, andrychowicz2021what, Engstrom2020Implementation}, which often require expensive tuning.
Indeed, it has been shown that when tuned effectively, good configurations often lead to dramatically improved performance in large scale settings~\citep{bo_alphago}.

To address these challenges, recent work in \emph{Automated Reinforcement Learning} (AutoRL) \citep{autorl_survey} has shown that rigorously searching these parameter spaces can lead to previously unseen levels of performance, even capable of breaking widely used simulators~\citep{pbtbt}.
However, {AutoRL} contains unique challenges, as different tasks even in the same suite are often best solved with different network architectures and hyperparameters~\citep{furuta2021pic, xu2022accelerated}.
Furthermore, due to the non-stationarities present in \gls{RL}~\citep{igl2021transient}, such as changing data distributions and the requirement for agents to model increasingly complex behaviors over time, optimal hyperparameters and architectures may not remain constant.
To address this, works have shown adapting hyperparameters through time~\citep{paul2019fast, pbtbt, parkerholder2021tuning, PBT} and defining fixed network architecture schedules~\citep{czarnecki2018mixmatch} can be beneficial for performance.
However, architectures and hyperparameters are inherently linked~\citep{networkwidthlearningrate}, and to date, no method combines the ability to jointly and continuously adapt both on the fly.

In this paper we focus on \gls{PBT} \citep{PBT} methods, where a population of agents is trained in parallel, copying across stronger weights and enabling adaption of hyperparameters in a single training run.
This allows \gls{PBT} methods to achieve impressive performance on many large-scale settings~\citep{capturetheflag, pbtfootball}.
However, \gls{PBT}-style methods are typically limited in scope due to two key factors: 1) they only optimize a handful of hyperparameters, either due to using random search \citep{PBT}, or model-based methods that do not scale to higher dimensions~\citep{pb2, parkerholder2021tuning};
2) \gls{PBT} methods are usually restricted to the same fixed architecture since weights are copied between agents.

\begin{figure}[t]
    \begin{center}
        \includegraphics[trim=0cm 0cm 0cm  -1.1cm, clip, width=1.0\linewidth]{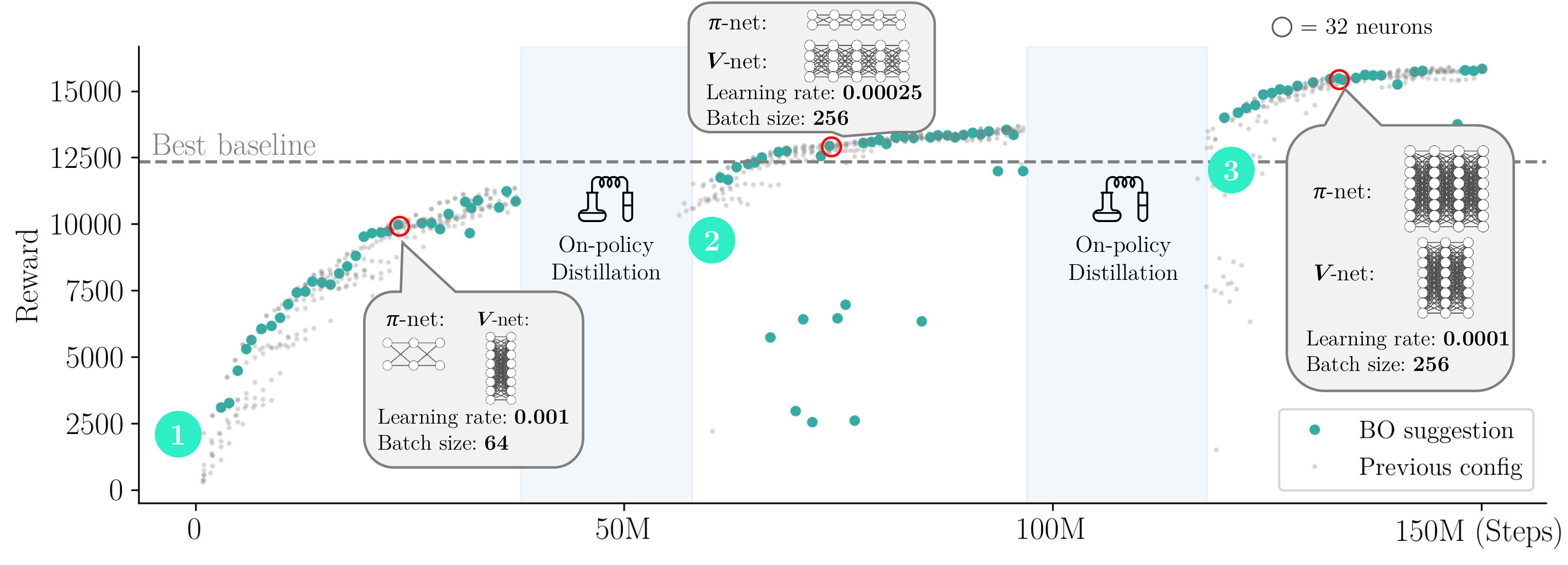}   
    \end{center}
    \vspace{-5mm}
        \caption{An example run of \textsc{bg-pbt} on HalfCheetah task in \textsc{Brax}: \textsc{bg-pbt} combines population-based training with high-dimensional Bayesian optimization, generational training (different generations marked with numbers in the figure) and on-policy distillation between generations to transfer across \gls{RL} agents with different neural architectures: at different points during training, both hyperparameters and the architectures of policy \& value networks are tuned on-the-fly, leading to significant improvement over the baseline.
        }
        \label{fig:cover_fig}
        \vspace{-8mm}
\end{figure}

We seek to overcome both of these issues in this paper, and propose \emph{\ourNAS}, with an example run demonstrated in Fig.~\ref{fig:cover_fig}.
\ourNAS is capable of tuning a significantly greater proportion of the agent's configuration, thanks to two new ideas.
First, we introduce a new model-based \emph{hyperparameter and architecture} exploration step motivated by recent advances in local Bayesian optimization~\citep{casmopolitan}.
Second, we take inspiration from \citet{openendedlearningteam2021openended} who showed that \gls{PBT} can be particularly effective when combined with network distillation~\citep{igl2021transient}, in an approach known as generational learning.
As prior works in generational training~\citep{alphastar, openendedlearningteam2021openended} show, the use of successive generations of architectures with distillation results in significantly reduced training time for new agents.
This provides us with an algorithm-agnostic framework to create agents which continuously discover their \emph{entire configuration}.
Thus, for the first time, we can tune hyperparameters and architectures during one training run as part of a single unified algorithm.

We run a series of exhaustive experiments tuning both the architectures and hyperparameters for a \gls{PPO}~\citep{ppo}  agent in the newly introduced \textsc{Brax} environment suite~\citep{brax}.
\textsc{Brax} enables massively parallel simulation of agents, making it perfect for testing population-based methods without vast computational resources.
Our agents significantly outperform both the tuned baseline and a series of prior \gls{PBT} methods.
Notably, we observe that \ourNAS often discovers a \emph{schedule of networks} during training---which would be infeasible to train from scratch.
Furthermore, \ourNAS discovers entirely new modes of behavior for these representative environments, which we show at \url{https://sites.google.com/view/bgpbt}. 

To summarize, the main contributions of this paper are as follows:
\begin{enumerate}[noitemsep,topsep=-8pt, leftmargin=3pt]
    \item We show for the first time it is possible to select architectures as part of a \textcolor{blue}{general-purpose} \gls{PBT} framework, using \textit{generational training with policy distillation} with \gls{NAS}.
    \item We propose a novel and efficient algorithm, \textbf{\ourNAS}, especially designed for high-dimensional mixed search spaces, which can select both architectures and hyperparameters on-the-fly with provable efficiency guarantees.
    \item We show in a series of experiments our \textit{automatic architecture curricula} make it possible to achieve significantly higher performance than previous methods.
\end{enumerate}

\section{Preliminaries}
We begin by introducing the reinforcement learning framework, population-based training, which our method is based on, and the general problem setup we investigate in this paper.

\textbf{Reinforcement Learning.}
We model the environment as a \gls{MDP} \citep{Sutton1998}, defined as a tuple $M = (\mathcal{S}, \mathcal{A}, P, R, \rho_0, \gamma)$, where $\mathcal{S}$ and $\mathcal{A}$ denote the state and action spaces respectively, $P(s_{t+1} | s_t, a_t)$  the transition dynamics, $R(s_t, a_t)$  the reward function, $ \rho_0$  the initial state distribution, and $\gamma \in (0, 1)$ the discount factor.
The goal is to optimize a policy $\pi (a_t | s_t)$ that maximizes the expected discounted return $\mathbb{E}_{\pi, P, \rho_0}\left[\sum_{t=0}^\infty \gamma^t R(s_t, a_t)\right]$.
Given a policy $\pi$, we may define the state value function $V^{\pi}(s) = \mathbb{E}_{\pi, P}\left[\sum_{t=0}^\infty \gamma^t R(s_t, a_t) | s_0 = s \right]$ and the state-action value-function $Q^{\pi}(s, a) = \mathbb{E}_{\pi, P}\left[\sum_{t=0}^\infty \gamma^t R(s_t, a_t) | s_0 = s, a_0 = a \right]$. The advantage function is then defined as the difference $A^{\pi}(s, a) = Q^{\pi}(s, a) - V^{\pi}(s)$.

A popular algorithm for online continuous control that we use is \gls{PPO} \citep{ppo}.
\gls{PPO} achieves state-of-the-art results for popular benchmarks~\citep{procgen} and is hugely parallelizable, making it an ideal candidate for population-based methods~\citep{pb2}.
\gls{PPO} approximates \textsc{trpo} ~\citep{ppo, trpo} and uses a clipped objective to stabilize training:
\begin{align}
\label{eqn:ppo}
\mathcal{L}_{\mathrm{PPO}}(\theta)=\min \left(\frac{\pi_{\theta}(a \mid s)}{\pi_{\mu}(a \mid s)} A^{\pi_{\mu}}, g(\theta, \mu) A^{\pi_{\mu}}\right), \text { where } g(\theta, \mu)=\operatorname{clip}\left(\frac{\pi_{\theta}(a \mid s)}{\pi_{\mu}(a \mid s)}, 1-\epsilon, 1+\epsilon\right)
\end{align}
where $\pi_{\mu}$ is a previous policy and $\epsilon$ is the clipping parameter.

\textbf{Population-Based Training.}
\gls{RL} algorithms, including \gls{PPO}, are typically quite sensitive to their hyperparameters.
\gls{PBT} \citep{PBT} is an evolutionary method that tunes \gls{RL} hyperparameters on-the-fly.
It optimizes a population of $B$ agents in parallel, so that their weights and hyperparameters may be dynamically adapted within a single training run.
In the standard paradigm without architecture search, we consider two sub-routines, $\mathrm{explore}$ and $\mathrm{exploit}$.
We train for a total of $T$ steps and evaluate performance every $t_{\mathrm{ready}} < T$ steps.
In the $\mathrm{exploit}$ step, the weights of the worst-performing agents are replaced by those from an agent randomly sampled from the set of best-performing ones, via \emph{truncation selection}.
To select new hyperparameters, we perform the $\mathrm{explore}$ step.
We denote the hyperparameters for the $b$th agent in a population at timestep $t$ as $\bz_t^b\in\mathcal{Z}$;
this defines a \textit{schedule} of hyperparameters over time $\left( z_t^b \right)_{t=1,...T}$.
Let $f_t(z_t)$ be an objective function (e.g. the return of a \gls{RL} agent) under a given set of hyperparameters at timestep $t$, our goal is to maximize the final performance $f_T(z_T)$.

The original \gls{PBT} uses a combination of random sampling and evolutionary search for the explore step by suggesting new hyperparameters mutated from the best-performing agents.
\gls{PB2} and \gls{PB2}-Mix \citep{pb2, parkerholder2021tuning} improve on \gls{PBT} by using \emph{\gls{BO}} to suggest new hyperparameters, relying on a time-varying \emph{\gls{GP}} \citep{RasmussenGP, bogunovic2016time} to model the data observed.
We will also use \gls{GP}-based \gls{BO} in our method, and we include a primer of \gls{GP}s and \gls{BO} in App. \ref{app:primer}.

\textbf{Problem Setup.}
We follow the notation used in \cite{pb2} and frame the hyperparameter optimization problem in the lens of optimizing an expensive, time-varying, black-box reward function $f_t : \mathcal{Z} \rightarrow \mathbb{R}$.
Every $t_{\mathrm{ready}}$ steps, we observe and record noisy observations, $y_t = f_t(\bz_t) + \epsilon_t$, where $\epsilon_t \sim \mathcal{N}(0, \sigma^2\mathbf{I})$ for some fixed variance $\sigma^2$.
We follow the typical \gls{PBT} setup by defining a hyperparameter space, $\mathcal{Z}$, which for the \textsc{Brax}~\citep{brax} implementation of \gls{PPO} we follow in the paper, 
consists of 9 parameters: learning rate, discount factor ($\gamma$), entropy coefficient ($c$), unroll length, reward scaling, batch size, updates per epoch, GAE parameter ($\lambda$) and clipping parameter ($\epsilon$).
To incorporate the architecture hyperparameters, $\by \in \mathcal{Y}$, we add 6 additional parameters leading to a 15-dimensional joint space $\mathcal{J} = \mathcal{Y} \times \mathcal{Z}$.
For both the policy and value networks, we add the width and depth of the \gls{MLP} and a binary flag on whether to use spectral normalization.

\section{Bayesian Generational Population-Based Training (BG-PBT)}
\begin{wrapfigure}{R}{0.5\textwidth}
\vspace{-7.5mm}
    \begin{minipage}{0.5\textwidth}
\begin{algorithm}[H]
\begin{footnotesize}
	    \caption{\ourNAS;
	    distillation and \gls{NAS} steps marked in \textcolor{magenta}{magenta} (§\ref{subsec:distillation})
	    }
	    \label{alg:main_alg}
	\begin{algorithmic}[1]
		\STATE {\bfseries Input}: pop size $B$, $t_{\mathrm{ready}}$, max steps $T$, $q$ (\% agents replaced per iteration)
		\STATE \textbf{Initialize} $B$ agents with weights $\{\theta_0^{(i)}\}_{i=1}^B$, random hyperparameters $\{\bz_0^{(i)}\}_{i=1}^B$ \textcolor{magenta}{and architectures $\{\by_0^{(i)}\}_{i=1}^B$},  %
        \FOR{$t=1, \dots, T$ (in parallel for all $B$ agents)}
        \STATE Train models \& record data for all agents 
        \IF {$t~\mathrm{mod}~t_{\mathrm{ready}} = 0$}
            \STATE Replace the weights \textcolor{magenta}{\& architectures} of the bottom $q \%$ agents with those of the top $q \%$ agents.
            \STATE Update the surrogate with new observations \& returns and adjust/restart the trust regions.
            \STATE \textcolor{magenta}{Check whether to start a new generation (see §\ref{subsec:distillation}).}%
            \IF {\textcolor{magenta}{start a new generation}}
            \STATE \textcolor{magenta}{Clear the \gls{GP} training data.}
            \STATE \textcolor{magenta}{Create $B$ agents with archs.~from BO/random.}
            \STATE \textcolor{magenta}{Distill from a top-$q\%$ performing agent of the existing generation to new agents.}
            \ELSE \STATE{Select new hyperparameters $\bz$ for the agents whose weights have been just replaced with randomly sampled configs (if $\mathbf{D} = \emptyset$) \textbf{OR} using the suggestions from the \gls{BO} agent described \textcolor{magenta}{conditioned on $\by$} (otherwise)}.
            \ENDIF
        \ENDIF
    	\ENDFOR
	\end{algorithmic}
\end{footnotesize}
\end{algorithm}
\end{minipage}
\vspace{-9mm}
\end{wrapfigure}
We present \our in \cref{alg:main_alg} which consists of two major components. First, a \gls{BO} approach to select new hyperparameter configurations $\bz$ for our agents (§\ref{subsec:bo}).
We then extend the search space to accommodate architecture search, allowing agents to choose their own networks (parameterized by $\by \in \mathcal{Y}$) and use on-policy distillation to transfer between different architectures (§\ref{subsec:distillation}). 

\subsection{High-Dimensional BO Agents in Mixed-Input Configuration Space for PBT}
\label{subsec:bo}

Existing population-based methods ignore (\gls{PB2}) or only partially address (\gls{PB2}-Mix, which does not consider ordinal variables such as integers) the heterogeneous nature of the mixed hyperparameter space $\mathcal{Z}$.
Furthermore, both previous methods are equipped with standard \gls{GP} surrogates which typically scale poorly beyond low-dimensional search spaces, and are thus only used to tune a few selected hyperparameters deemed to be the most important based on human expertise.
To address these issues, \our explicitly accounts for the characteristics of typical \gls{RL} hyperparameter search space by making several novel extensions to \textsc{Casmopolitan} \citep{casmopolitan}, a state-of-the-art \gls{BO} method for high-dimensional, mixed-input problems for our setting.
In this section, we outline the main elements of our design, and we refer the reader to  App. \ref{appsubsec:propose} for full technical details of the approach.

\textbf{Tailored Treatment of Mixed Hyperparameter Types.}
Hyperparameters in \gls{RL} can be continuous (e.g. discounting factor), ordinal (discrete variables with ordering, e.g. batch size) and categorical (discrete variables without ordering, e.g. activation function).
\our treats each variable type differently: we use tailored kernels for the \gls{GP} surrogate, and utilize interleaved optimization for the acquisition function, alternating between local search for the categorical/ordinal variables and gradient descent for the continuous variables.
\our extends both \textsc{Casmopolitan} and \gls{PB2}-Mix by further accommodating ordinal variables, as both previous works only considered continuous and categorical variables.
We demonstrate the considerable benefits of explicitly accounting for the ordinal variables in App. \ref{app:ablation_ordinal}.

\textbf{\gls{TR}.} 
\gls{TR}s have proven success in extending \gls{GP}-based \gls{BO} to higher-dimensional search spaces, which were previously intractable due to the curse of dimensionality, by limiting exploration to promising regions in the search space based on past observations \citep{eriksson2019scalable, casmopolitan}.
In the \gls{PBT} context, \gls{TR}s also implicitly avoid large jumps in hyperparameters, which improves training stability. 
We adapt the \gls{TR}s used in the original \textsc{Casmopolitan} to the time-varying setup by defining \gls{TR}s around the current best configuration, and then adjusting them dynamically: 
similar to \citet{eriksson2019scalable} and \citet{casmopolitan}, \gls{TR}s are expanded or shrunk upon consecutive ``successes'' or ``failures''.
We define a proposed configuration to be a ``success'' if it appears in the top $q\%$-performing agents and a ``failure'' otherwise. 
When the \gls{TR}s shrink below some specified minimum, a \emph{restart} is triggered, which resets the \gls{GP} surrogate to avoid becoming stuck at a local optimum.
We adapt the \gls{UCB}-based criterion proposed in~\citet{casmopolitan} which is based on a global, auxiliary \gls{GP} model to the time-varying setting to re-initialize the population when a restart is triggered.
Full details are provided in App. \ref{appsubsec:restart_global}.

\textbf{Theoretical Properties.}
Following \citet{casmopolitan}, we show that under typical assumptions (presented in App. \ref{app:proofs}) used for \gls{TR}-based algorithms~\citep{yuan2000review}, our proposed \ourNAS, without architecture and distillation to be introduced in §\ref{subsec:distillation}, converges to the global optimum asymptotically.
Furthermore, we derive an upper bound on the cumulative regret and show that under certain conditions it achieves sublinear regret.
We split the search space into $\mathcal{Z} = [\mathcal{H}, \mathcal{X}]$ (categorical/continuous parts respectively).
We note that \cref{assu:gp-approx} considers the minimum \gls{TR} lengths $L^x_{\min}, L^h_{\min}$ are set to be small enough so that the \gls{GP} approximates $f$ accurately in the \gls{TR}s. 
In practice, this assumption only holds asymptotically, i.e. when the observed datapoints in the \gls{TR}s goes to infinity.
We present the main result, the time-varying extension to Theorem 3.4 from \citet{casmopolitan}, and then refer to App. \ref{app:proofs} for the derivation.

\begin{theorem}
\label{thr:gconverge-mix}
Assume Assumptions \ref{assu:f-bounded} \& \ref{assu:gp-approx} hold.
Let $f_i: [\mathcal{H}, \mathcal{X}] \rightarrow \mathbb{R}$ be a time-varying objective defined over a mixed space and $\zeta \in (0, 1)$.
Suppose that:
    (i) there exists a class of functions $g_i$ in the RKHS $\mathcal{G}_k([\mathcal{H}, \mathcal{X}])$ corresponding to the kernel $k$ of the global \gls{GP} model, such that $g_i$ passes through all the local maximas of $f_i$ and shares the same global maximum as $f_i$;
    (ii) the noise at each timestep $\epsilon_i$ has mean zero conditioned on the history and is bounded by $\sigma$;
    (iii) $\Vert g_i \Vert^2_k \leq B$.
Then \ourNAS obtains a regret bound
\begin{equation} \nonumber
\text{Pr} \Big\lbrace R_{IB} \leq  \sqrt{\frac{C_1 I\beta_{I}}{B}
\gamma \bigl(IB;k;[\mathcal{H}, \mathcal{X}] \bigr)}+2 \quad \forall I \geq 1 \Big \rbrace \geq 1 - \zeta,
\end{equation}
with $C_1=8/\log(1+\sigma^{-2})$, $\gamma(T;k;[\mathcal{H}, \mathcal{X}])$ defined in \cref{thr:tv_kernel_mig} and $\beta_I$ is parameter balancing  exploration-exploitation as in Theorem 2 of \citet{pb2}.
\end{theorem}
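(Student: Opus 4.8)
The plan is to transplant the time-varying batch GP-UCB regret analysis of \gls{PB2} (Theorem 2 of \citet{pb2}, itself built on the time-varying bandit framework of \citet{bogunovic2016time}) onto the trust-region-with-restart scaffold of \citet{casmopolitan}, so that the final bound is the \gls{PB2} bound with the stationary information gain replaced by the time-varying mixed-space quantity $\gamma\bigl(IB;k;[\mathcal H,\mathcal X]\bigr)$. The pivotal structural object is the global auxiliary \gls{GP} with kernel $k$, which is never reset across restarts: hypothesis (i) guarantees that its RKHS representative $g_i$ upper-bounds $f_i$ at every local maximum and agrees with it at the global maximum, so the \gls{UCB} rule that drives re-initialization after a restart can never discard the true optimizer, and bounding the regret incurred against $g_i$ suffices to bound the regret against $f_i$.

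First I would establish simultaneous confidence bounds. Using hypotheses (ii)--(iii) --- zero-mean noise bounded by $\sigma$ conditioned on the history and $\Vert g_i\Vert_k^2\le B$ --- together with the standard RKHS self-normalized concentration inequality, I would show that with probability at least $1-\zeta$ the objective lies in the intervals $\mu_{i-1}\pm\sqrt{\beta_i}\,\sigma_{i-1}$ for all $i$ simultaneously, where $\beta_i$ is the exploration--exploitation schedule of Theorem 2 of \citet{pb2}. The time-varying nature enters here: the posterior mean and variance must be computed under the non-stationary kernel, so that the effective ``forgetting'' of stale observations is already baked into $\sigma_{i-1}$.

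Next I would bound the per-configuration instantaneous regret by $2\sqrt{\beta_i}\,\sigma_{i-1}$, which is immediate from the confidence bounds and the fact that the acquisition is maximized inside each trust region. The target quantity $R_{IB}$ is the population regret which, following \citet{pb2}, aggregates the $B$ per-agent regrets within each of the $I$ rounds with a $1/B$ population average; applying Cauchy--Schwarz over the $IB$ per-agent terms then produces both the iteration factor $I$ and, through the averaging, the $1/B$ in the bound, leaving a sum $\sum_i\sigma_{i-1}^2$ to control. Converting this variance sum into the time-varying maximum information gain $\gamma\bigl(IB;k;[\mathcal H,\mathcal X]\bigr)$ of \cref{thr:tv_kernel_mig} via the standard inequality, with $C_1=8/\log(1+\sigma^{-2})$ collecting the constants, gives the claimed expression; the additive $+2$ is inherited verbatim from the time-varying drift term in Theorem 2 of \citet{pb2}.

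The main obstacle will be reconciling the trust-region restarts with a single, monotone information-gain budget. Each restart clears the local \gls{GP} data and resamples the population, so --- as in \citet{casmopolitan} --- one must argue that it is the global model (never reset) that governs the cumulative regret, while the local models only control the rate at which ``successes'' accrue and hence the frequency of restarts; the restart events must be shown to contribute only to lower-order terms rather than to break the telescoping of the confidence bounds. Layered on top of this is the genuinely time-varying difficulty: the sublinearity of the regret now rests entirely on the growth of $\gamma\bigl(IB;k;[\mathcal H,\mathcal X]\bigr)$, which couples the spatial information gain of the mixed kernel over $[\mathcal H,\mathcal X]$ with the temporal forgetting rate, and I would need to verify that this product still grows sublinearly in $IB$ under the kernel assumptions of \cref{thr:tv_kernel_mig}.
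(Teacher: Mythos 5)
Your overall skeleton matches the paper's proof: a global auxiliary \gls{GP} that survives restarts, the surrogate functions $g_i$ from hypothesis (i) used to transfer regret from $g_i$ to $f_i$, RKHS confidence bounds with the \gls{PB2} schedule $\beta_i$, and a final appeal to Theorem 2 of \citet{pb2} with the stationary information gain replaced by the time-varying mixed-space quantity from \cref{thr:tv_kernel_mig}. However, there is a genuine gap at the step where you claim the instantaneous regret bound $2\sqrt{\beta_i}\,\sigma_{i-1}$ is ``immediate from the confidence bounds and the fact that the acquisition is maximized inside each trust region.'' That argument fails as stated: the chain $f_i(\bz_i^{**}) \leq \mathrm{UCB}(\bz_i^{**}) \leq \mathrm{UCB}(\bz_{\text{chosen}})$ requires the acquisition to be maximized over the \emph{entire} space $[\mathcal{H},\mathcal{X}]$, and the global optimum $\bz_i^{**}$ will generally lie outside the current trust region, breaking the second inequality. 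You correctly flag reconciling restarts with the global model as ``the main obstacle,'' but you leave it unresolved, and it is precisely the load-bearing step.

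The paper's resolution is a two-part decomposition that your proposal is missing. First, the regret is accounted \emph{per restart}, not per \gls{TR} iteration: at the $i$-th restart the new center $\bz_i^{(0)}$ is chosen by maximizing the auxiliary \gls{GP}'s \gls{UCB} over all of $[\mathcal{H},\mathcal{X}]$ (trained on the set $D^*_{i-1}$ of per-restart incumbents), which legitimizes $g_i(\bz_i^{**}) - g_i(\bz_i^{(0)}) \leq 2\sqrt{\beta_i}\,\sigma_{gl}(\bz_i^{(0)}; D^*_{i-1})$ and hence, since $g_i$ agrees with $f_i$ at these points, the same bound for $f_i$. Second, the local convergence result (\cref{thr:local-converge}, resting on \cref{assu:gp-approx}) guarantees that the \gls{TR} episode launched from $\bz_i^{(0)}$ terminates at a local maximum $\bz_i^*$ with $f(\bz_i^{(0)}) \leq f(\bz_i^*)$, so the per-restart regret $f_i(\bz_i^{**}) - f_i(\bz_i^*)$ inherits the bound at the center. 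Only after this does the summation over $I$ restarts and $B$ agents via \gls{PB2}'s Theorem 2 go through. Without invoking the local convergence theorem and the global (unrestricted) restart acquisition, the telescoping you describe does not close; the rest of your outline (Cauchy--Schwarz, conversion of $\sum_i \sigma_{i-1}^2$ into $\gamma(IB;k;[\mathcal{H},\mathcal{X}])$, the constant $C_1$, and the additive $+2$ from the drift term) is consistent with the paper.
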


Under the same ideal conditions assumed in~\citet{bogunovic2016time, pb2} where the objective does not vary significantly through time, the cumulative regret bound is sublinear with $\lim_{I \rightarrow \infty} \frac{R_{IB}}{I}=0$, when $\omega \rightarrow 0$ and $\tilde{N} \rightarrow I$.

\begin{figure}[t]
\vspace{-4mm}
    \begin{center}
        \includegraphics[trim=0cm 0cm 0cm  -1.1cm, clip, width=1.0\linewidth]{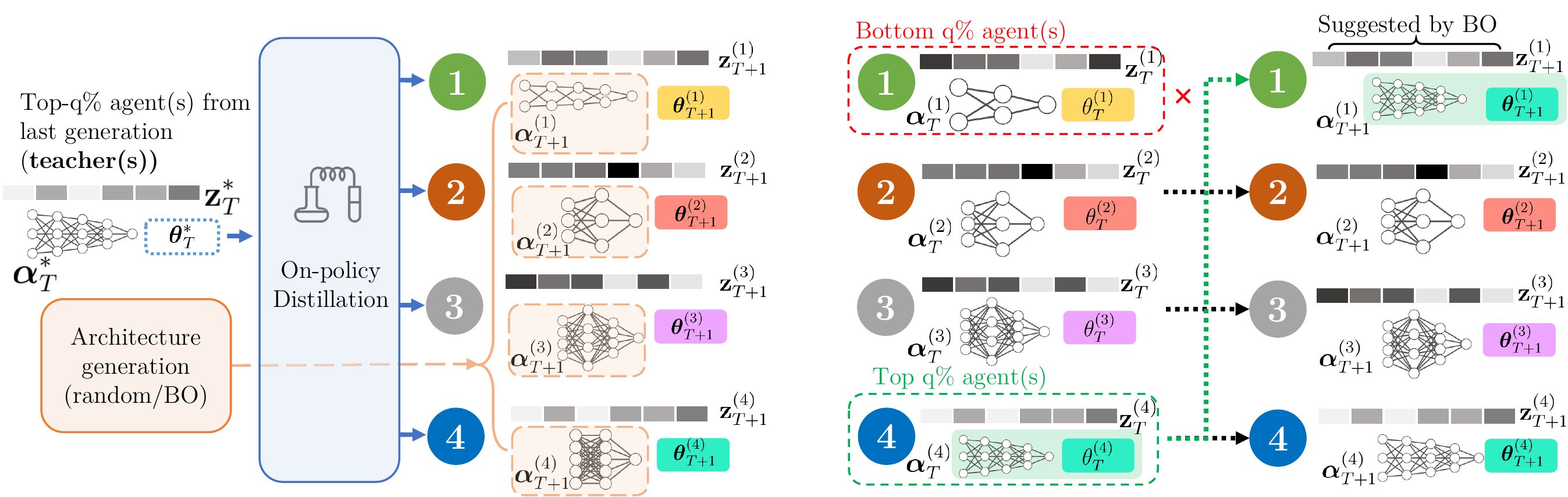}   
    \end{center}
    \vspace{-4mm}
        \caption{\our (a) at the \emph{beginning} of a generation (\textbf{left}) and (b) \emph{during} a generation (\textbf{right}). At the start of a generation, agents with diverse architectures are suggested and on-policy distillation is used to transfer information across generations \& different architectures (§\ref{subsec:distillation}). Within a generation, a high-dimensional, mixed-input \gls{BO} agent suggests hyperparameters (§\ref{subsec:bo}, we copy weights across fixed architectures).
        }
        \label{fig:method_picture}
        \vspace{-4mm}
\end{figure}

\subsection{Adapting Architectures on the Fly}
\label{subsec:distillation}
Now that we are equipped with an approach to optimize in high-dimensional $\mathcal{Z}$, we focus on choosing \emph{network architectures}.
Despite their importance in \gls{RL} \citep{coinrun, furuta2021pic}, architectures remain underexplored as a research direction.
Adapting architectures for \gls{PBT} methods is non-trivial as we further enlarge the search space, and weights cannot readily be copied across different networks.
Inspired by \citet{openendedlearningteam2021openended}, our key idea is that when beginning a new {\emph{generation}} we can distill behaviors into new architectures (see Fig. \ref{fig:method_picture}).
Specifically:
\begin{itemize}[noitemsep,topsep=-8pt, leftmargin=3pt]
\item
\emph{Starting each generation:} We fill the population of $B$ agents by generating a diverse set of architectures for both the policy and value networks.
For the first generation, this is done via random sampling.
For subsequent generations, we use suggestions from \gls{BO} and/or random search with successive halving over the architecture space $\mathcal{Y}$ only (refer to App. \ref{appsubsec:new_archs} for details); the \gls{BO} is trained on observations of the best performance each architecture has achieved in previous generations.
We initialize a new generation when the evaluated return stagnates beyond a pre-set patience during training.

\item
\emph{Transfer between generations:} 
Apart from the very first generation, we transfer information from the best agent(s) of the previous generation to each new agent, in a similar fashion  to~\citet{openendedlearningteam2021openended}, using on-policy distillation with a joint supervised and \gls{RL} loss between \emph{different architectures} as shown in Fig.~\ref{fig:method_picture}a.
Given a learned policy $\pi_i$ and value function $V_i$ from a previous generation, the new joint loss optimized is:
\begin{align}
\mathbb{E}_{(s_t, a_t) \sim \pi_{i+1}} [\alpha_{\mathrm{RL}}\mathcal{L}_{\mathrm{RL}} + \alpha_{V}\norm{V_i(s_t) - V_{i+1}(s_t)}_2 + \alpha_{\pi} \mathbb{D}_{\textrm{KL}}\bigl( \pi_i(\cdot \vbar s_t) \,||\, \pi_{i+1}(\cdot \vbar s_t) \bigr)]
\end{align}
for weights $\alpha_{\mathrm{RL}}\geq0$, $\alpha_{V}\geq0$, $\alpha_{\pi}\geq0$, and \gls{RL} loss $\mathcal{L}_{\mathrm{RL}}$ taken from \cref{eqn:ppo}.
We linearly anneal the supervised losses over the course of each generation, so that by the end, only the \gls{RL} loss remains.

\item
\emph{During a generation:} We follow standard \gls{PBT} methods to evolve the hyperparameters of each agent by copying weights $\btheta$ \emph{and the architecture} $\by$ from a top-$q\%$ performing agent to a bottom-$q\%$ agent, as shown in Fig.~\ref{fig:method_picture}b.
This creates an effect similar to successive halving \citep{pmlr-v28-karnin13, pmlr-v51-jamieson16} where poorly-performing architectures are quickly removed from the population in favor of more strongly-performing ones; typically at the end of a generation, 1 or 2 architectures dominate the population.
While we do not introduce new architectures within a generation, the hyperparameter suggestions are conditioned on the current policy and value architectures by incorporating the architecture parameters $\by$ as contextual fixed dimensions in the \gls{GP} surrogate described in §\ref{subsec:bo}.
\end{itemize}

\section{Experiments}
\label{sec:experiments}
While \our provides a framework applicable to any \gls{RL} algorithm, we test our method on 7 environments from the new \textsc{Brax} environment suite, using \gls{PPO}.
We begin by presenting a comparative evaluation of \our against standard baselines in population-based training to both show the benefit of searching over the full hyperparameter space with local \gls{BO} and of automatically adapting architectures over time.
We further show that our method beats end-to-end \gls{BO}, showing the advantage of dynamic schedules.
Next, we analyze these learned hyperparameter and architecture schedules using \our and we show analogies to similar trends in learning rate and batch size in supervised learning.
Finally, we perform ablations on individual components of \our.
For all population-based methods, we use a population size $B=8$ and a total budget of 150M steps.
We note that \our with architectures uses additional on-policy samples from the environment in order to distill between architectures.
We instantiate the \textsc{Brax} environments with an action repeat of 1.
We use $t_{\mathrm{ready}}$ of 1M for all \gls{PBT}-based methods on all environments except for Humanoid and Hopper, where we linearly anneal $t_{\mathrm{ready}}$ from 5M to 1M.
The remaining hyperparameters and implementation details used in this section are listed in App. \ref{app:implementation_details}.

\begin{figure}[t]
\vspace{-2mm}
\centering
     \begin{subfigure}{1\linewidth}
          \includegraphics[width=\textwidth]{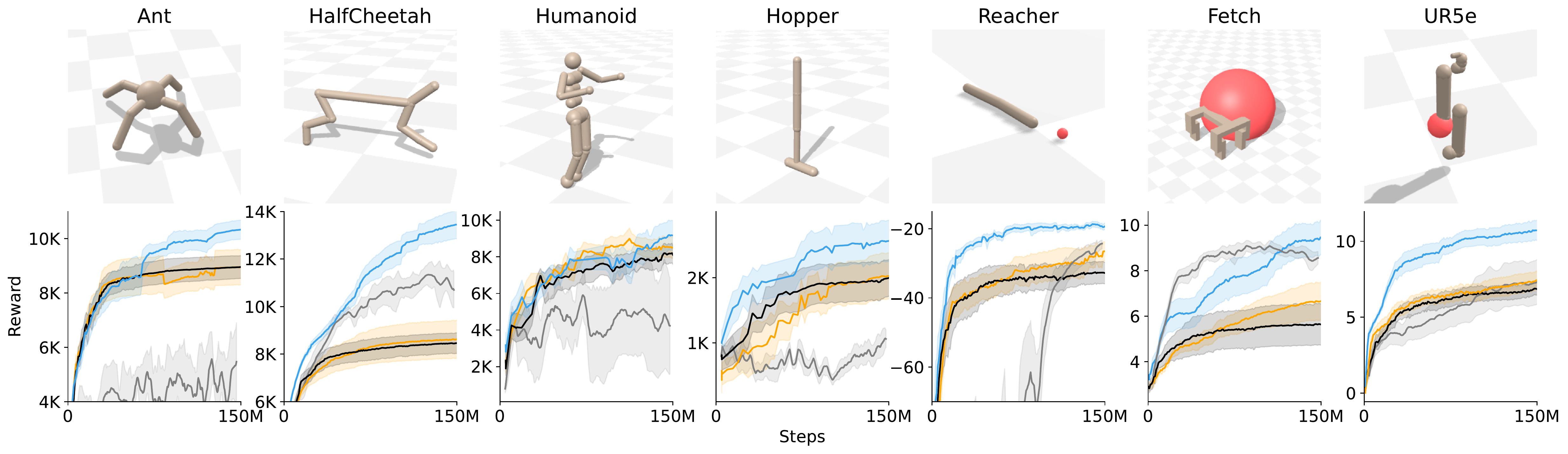}
     \end{subfigure}
     \vspace{-2mm}
     \begin{subfigure}{0.4\linewidth}
       \includegraphics[width=\textwidth, trim={0, 0, 0, .6cm}, clip]{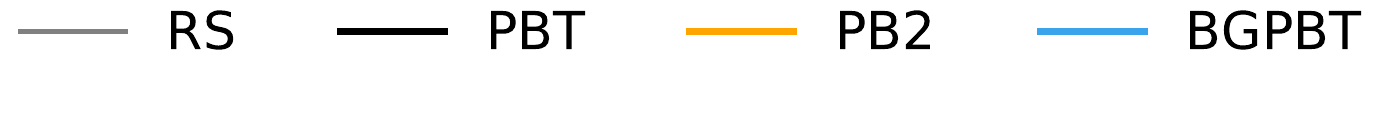}
     \end{subfigure}
    \vspace{-3.5mm}
\caption{
    \small{Visualization of each environment (\textbf{top row}) and mean evaluated return over the population with $\pm$1 \textsc{sem} (shaded) across 7 random seeds (\textbf{bottom row}) in all environments. \texttt{RS} refers to the higher performing of RS-$\mathcal{Z}$ or RS-$\mathcal{J}$ in \cref{tab:main_results}.
    }
}
\vspace{-4mm}
\label{fig:main_fig}
\end{figure}
\begin{table}[tb]
    \centering
        \caption{{Mean evaluated return $\pm 1$\textsc{sem} across 7 seeds shown. For \gls{PBT}-style methods (\gls{PBT}, \gls{PB2} and \our), the mean best-performing agent in the population is shown. Methods performing within 1 \textsc{sem} of the best-performing method are bolded (the same applies to all tables).
        }
        }
        \vspace{-3mm}
        \begin{footnotesize}
    \adjustbox{max width=0.98\columnwidth}{
    \begin{tabular}{ccccccc}
    \toprule
     Method & \textit{PPO}$^*$ & RS & RS & PBT & PB2 & BG-PBT\\
     Search space & $\mathcal{Z}$ & $\mathcal{Z}$ & $\mathcal{J}$ & $\mathcal{Z}$ & $\mathcal{Z}$ & $\mathcal{J}$\\
    \midrule
    Ant & ${3853}_{\pm 676}$ & $6780_{\pm 317}$ & $4781_{\pm 515}$ & $8955_{\pm 385}$ &  $8954_{\pm 594}$ & $\mathbf{10349_{\pm 326}}$ \\
    HalfCheetah & ${6037}_{\pm 236}$ & $9502_{\pm 76}$ & $10340_{\pm 329}$ & $8455_{\pm 400}$ & $8629_{\pm 746}$ & $\mathbf{13450_{\pm 551}}$ \\
    Humanoid  & ${\boldsymbol{{9109}}_{\pm 987}}$ & $4004_{\pm 519}$ & $4652_{\pm 1002}$ & $7954_{\pm 437}$ & ${8452_{\pm 512}}$ & $\mathbf{9171_{\pm 748}}$ \\
    Hopper  & ${120}_{\pm 43}$ & $339_{\pm 25}$ & $943_{\pm 185}$ & $2002_{\pm 254}$ & $2027_{\pm 323}$ &  $\mathbf{2569_{\pm 293}}$\\
    Reacher & ${-189.3}_{\pm 43.7}$ & $-24.2_{\pm 1.4}$ & $-95.2_{\pm 25.3}$ & $-32.9_{\pm 2.8}$ & $-26.6_{\pm 2.6}$ & $\mathbf{{-19.2_{\pm 0.9}}}$\\
    Fetch & $\boldsymbol{{14.0}_{\pm 0.2}}$ & $5.2_{\pm 0.4}$ & $8.6_{\pm 0.2}$ & $5.5_{\pm 0.8}$ & $6.6_{\pm 0.7}$ &  ${9.4}_{\pm 0.7}$ \\    
    UR5e & ${5.2}_{\pm 0.2}$ & $5.3_{\pm 0.4}$ & $7.7_{\pm 0.3}$ & $6.9_{\pm 0.4}$ & $7.4_{\pm 0.6}$ & $\mathbf{10.7_{\pm 0.6}}$\\
    \bottomrule 
    \multicolumn{7}{l}{$^*$From the \textsc{Brax} authors and implemented in a different framework (JAX) to ours (PyTorch)}
    \end{tabular}}
    \label{tab:main_results}
    \end{footnotesize}
    \vspace{-5mm}
\end{table}

\textbf{Comparative Evaluation of \our.}
We first perform a comparative evaluation of \our against standard baselines in \gls{PBT}-methods and the \gls{PPO} baseline provided by the \textsc{Brax} authors.
We show the benefit of using local \gls{BO} and treating the whole \gls{RL} hyperparameter space $\mathcal{Z}$, by comparing \our against \gls{PBT}~\citep{PBT}, \gls{PB2}~\citep{pb2} and \gls{RS} using the default architecture in \textsc{Brax}.
In \gls{RS}, we simply sample from the hyperparameter space and take the best performance found using the same compute budget as the \gls{PBT} methods.
Next, we include architecture search into \our using the full space $\mathcal{J}$ and show significant gains in performance compared to \our without architectures; we use random search over $\mathcal{J}$ as a baseline.
The optimized \gls{PPO} implementation from the \textsc{Brax} authors is provided as a sequential baseline.
We present the results in \cref{tab:main_results} and the training trajectories in Fig.~\ref{fig:main_fig}.

We show that \our significantly outperforms the \gls{RS} baselines and the existing \gls{PBT}-style methods in almost all environments considered.
We also observe that \gls{RS} is a surprisingly strong baseline, performing on par or better than \gls{PBT} and \gls{PB2} in HalfCheetah, Reacher, Fetch and UR5e --- this is due to a well-known failure mode in \gls{PBT}-style algorithms where they may be overly greedy in copying sub-optimal behaviors early on and then fail to sufficiently explore in weight space when the population size is modest.
\our avoids this problem by \emph{re-initializing networks each generation} and distilling, which prevents collapse to suboptimal points in weight space.

\textbf{Experiments with Different Training Timescales.}
We further show experiments with a higher budget of $300$M timesteps and/or an increased population size up to $B=24$ to investigate the scalability of \our in larger-scale environments (App. \ref{app:increased_timesteps}). 
Furthermore, given that \our uses additional on-policy samples in order to distill between architectures, we conduct experiments of \our with reduced training budget in App. \ref{app:decreased_timesteps}.
We find that even when the maximum timesteps are roughly halved, \our still outperforms the baseline AutoRL methods. 

\textbf{Comparison Against Sequential \gls{BO}.} We further compare against \gls{BO} in the traditional sequential setup (Table \ref{tab:bo_comparison}): for each \gls{BO} iteration, the agent is trained for the full 150M timesteps before a new hyperparameter suggestion is made.
To enable \gls{BO} to improve on \gls{RS}, we allocate a budget of 50 evaluations, \emph{which is up to 6$\times$ more expensive} than our method and even more costly in terms of wall-clock time if vanilla, non-parallel \gls{BO} is used.
We implement this baseline using SMAC3 \citep{lindauer2022smac3} in both the $\mathcal{Z}$ and $\mathcal{J}$ search spaces (denoted BO-$\mathcal{Z}$ and BO-$\mathcal{J}$ respectively in \cref{tab:bo_comparison}).
While, unsurprisingly, \gls{BO} improves over the \gls{RS} baseline, \our still outperforms it in a majority of environments.
One reason for this is that \our naturally discovers a dynamic schedule of hyperparameters and architectures, which is strictly more flexible than a carefully tuned but still static configuration -- we analyze this below.

\begin{table}[t]
\vspace{-6mm}
\centering
\begin{minipage}{.47\linewidth}
\centering
  \captionsetup{width=.91\linewidth}
        \caption{\small{Comparison against sequential BO$^*$}
        }
        \vspace{-3mm}
        \begin{footnotesize}
    \adjustbox{max width=0.95\columnwidth}{
    \begin{tabular}{cccc}
    \toprule
     Method & BO-$\mathcal{Z}$$^*$ & BO-$\mathcal{J}$$^*$ & BG-PBT\\
    \midrule
    Ant &  $6975_{\pm 1013}$ & $7149_{\pm 507}$ & $\mathbf{10349_{\pm 326}}$ \\
    HalfCheetah & $11202_{\pm 204}$ & $10859_{\pm 174}$ &$\mathbf{13450_{\pm 551}}$ \\
    Humanoid  &  $\mathbf{9040_{\pm 1303}}$ & ${4845_{\pm 962}}$ & $\mathbf{{9171_{\pm 748}}}$ \\
    Hopper  &  $358_{\pm 60}$ & ${1254_{\pm 154}}$ &  $\mathbf{2569_{\pm 293}}$\\
    Reacher & $\mathbf{-17.3_{\pm 0.3}}$ & ${{-51.7_{\pm 18.3}}}$ &  ${{-19.2_{\pm 0.9}}}$\\
    Fetch & $\mathbf{13.2_{\pm 0.2}}$ & $11.6_{\pm 0.1}$ & ${9.4}_{\pm 0.7}$ \\    
    UR5e & $9.0_{\pm 0.5}$ & $6.3_{\pm 1.4}$ & $\mathbf{10.7_{\pm 0.6}}$\\
    \bottomrule 
    \multicolumn{4}{l}{$^*$More resources required compared to \our.}
    \end{tabular}}
    \label{tab:bo_comparison}
    \end{footnotesize}
\end{minipage}
\hfill
\begin{minipage}{.49\linewidth}
\centering
  \captionsetup{width=.95\linewidth}
        \caption{Ablation studies
        }
        \vspace{-3mm}
        \begin{footnotesize}
    \adjustbox{max width=0.95\columnwidth}{
    \begin{tabular}{cccc}
    \toprule
     Method & No TR/NAS & No NAS & BG-PBT\\
    \midrule
    Ant &  $8954_{\pm 594}$ & $9352_{\pm 402}$ & $\mathbf{10349_{\pm 326}}$ \\
    HalfCheetah & $8629_{\pm 746}$ & $9483_{\pm 626}$ &$\mathbf{13450_{\pm 551}}$ \\
    Humanoid  &  ${8452_{\pm 512}}$ & $\mathbf{10359_{\pm 647}}$ & ${9171_{\pm 748}}$ \\
    Hopper  &  $2027_{\pm 323}$ & $\mathbf{{2511_{\pm 154}}}$ &  $\mathbf{2569_{\pm 293}}$\\
    Reacher & $-26.6_{\pm 2.6}$ & $\mathbf{{-17.6_{\pm 0.8}}}$ &  ${{-19.2_{\pm 0.9}}}$\\
    Fetch & $6.6_{\pm 0.7}$ & $7.3_{\pm 0.8}$ & $\mathbf{9.4}_{\pm 0.7}$ \\    
    UR5e & $7.4_{\pm 0.6}$ & $9.0_{\pm 0.8}$ & $\mathbf{10.7_{\pm 0.6}}$\\
    \bottomrule 
    \\
    \end{tabular}}
    \label{tab:ablation}
    \end{footnotesize}
\end{minipage}
\end{table}
\begin{figure}[tb]
\vspace{-4mm}
\centering
     \centering
     \includegraphics[width=\textwidth]{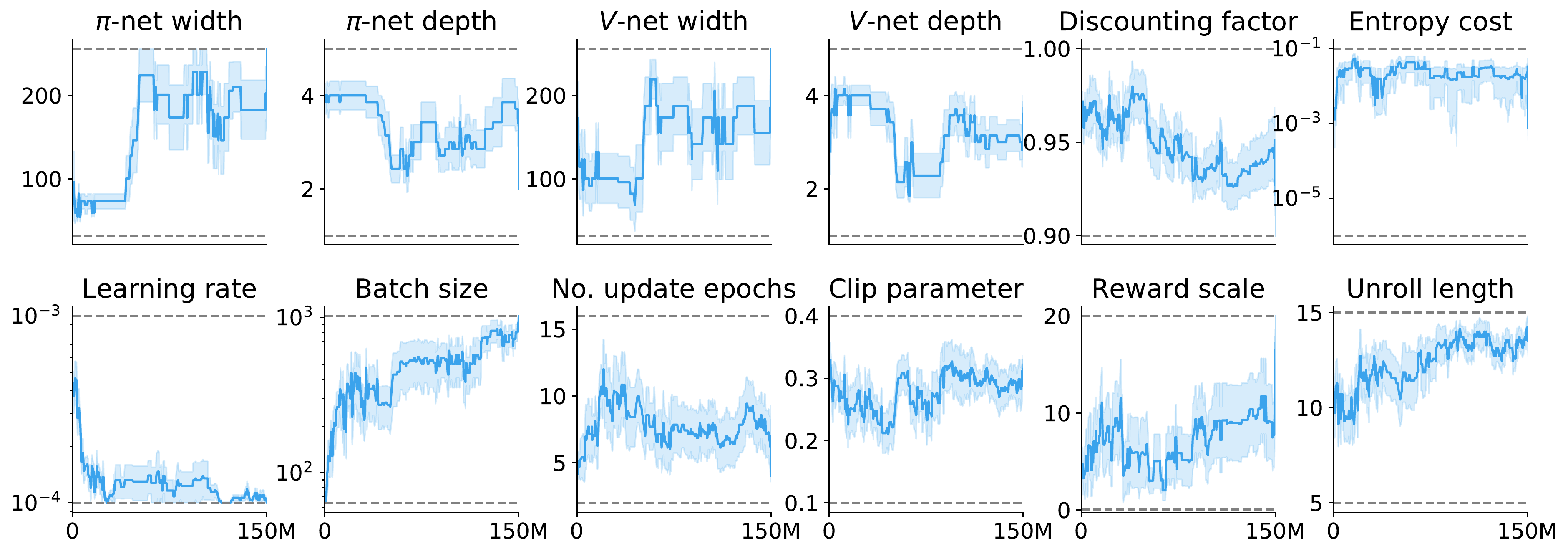}
\vspace{-7mm}
\caption{
    \small{The hyperparameter and architecture schedule discovered by \our on Ant: we plot the hyperparameters of the best-performing agent in the population averaged across 7 seeds with $\pm$ 1 \textsc{sem} shaded. \textcolor{gray}{Gray dashed lines} denote the hyperparameter bounds.
    }
}
\vspace{-5mm}
\label{fig:ant_schedule}
\end{figure}

\textbf{Analysis of Discovered Hyperparameter and Architecture Schedules.}
We present the hyperparameter and architecture schedules learned by \our in our main comparative evaluation on Ant in Fig.~\ref{fig:ant_schedule} (results on other environments are presented in App.~\ref{app:more_schedule}).
We find consistent trends across environments such as the decrease of learning rate and increase in batch sizes over time, consistent to common practices in both \gls{RL}~\citep{Engstrom2020Implementation} and supervised learning, but crucially \our discovers the same \emph{without any pre-defined schedule}.
We note, however, that the exact rate at which the learning rate decreases and batch size increase differs across different environments -- for example, in Ant we find that the learning rate quickly drops from a relatively large value to almost the smallest possible $10^{-4}$, whereas in UR5e, the schedule is much less aggressive. This suggests that the optimal schedule is dependent on the exact environment, and a uniform, manually-defined schedule as in~\citet{Engstrom2020Implementation} may not be optimal. We demonstrate this empirically in App. \ref{app:comp_rs}, where we compare against \gls{RS} but with the learning rate following a manually-defined cosine annealing schedule.
We also find that different networks are favored at different stages of training, but the exact patterns differ across environments: for Ant (Fig.~\ref{fig:ant_schedule}), we find that larger networks are preferred towards the end of training, with the policy and value network widths increasing over time:
Prior work has shown that larger networks like those we automatically find towards the end of training \emph{can be notoriously unstable and difficult to train from scratch}~\citep{czarnecki2018mixmatch, ota2021training}, which further supports our use of generational training to facilitate this.

\textbf{Ablation Studies.}
\our improves on existing methods by using local \gls{TR}-based \gls{BO} (§\ref{subsec:bo}) and \gls{NAS} \& distillation (§\ref{subsec:distillation}).
We conduct an ablation study by removing either or both components in \cref{tab:ablation} (a comparison between the training trajectories in Fig. \ref{fig:nonas_fig} may be found in App. \ref{app:ablation}), where ``No NAS'' does not search architectures or distill but uses the default \textsc{Brax} architectures, and ``No TR/NAS'' further only uses a vanilla \gls{GP} surrogate and is identical to \gls{PB2}. 
We find the tailored \gls{BO} agent in §\ref{subsec:bo} improves performance across the board.
On the importance of \gls{NAS} \& distillation, in all environments except for Humanoid and Reacher, \our matches or outperforms ``No NAS'', despite $\mathcal{J}$ being a more complicated search space and the ``No NAS'' baseline is conditioned on strongly-performing default architectures.
We also see a particularly large gain for HalfCheetah and Fetch when we include architectures, demonstrating the effectiveness of the generational training and \gls{NAS} in our approach.
We include additional ablation studies in App. \ref{app:ablation}.

\section{Related Work}

\textbf{On-the-fly Hyperparameter Tuning.}
Our work improves on previous \gls{PBT}~\citep{pb2, PBT, pbtbt} style methods; in particular, we build upon \citet{parkerholder2021tuning}, using a more scalable \gls{BO} step, and adding architecture search with generational learning.
\citet{fire_pbt} introduce an approach for increasing diversity in the weight space for \gls{PBT}, orthogonal to our work.
There have also been non-population-based methods for dynamic hyperparameter optimization, using bandits \citep{agent57, top, dvd, rp1,gpbo_tvo}, gradients \citep{paul2019fast, metagradients, stac, flennerhag2022bootstrapped} or Evolution \citep{online_hpo_evolutionary_arvix_20} which mostly do not search over architectures.
A notable exception is Sample-efficient Automated Deep Learning (SEARL) \citep{franke2020sample}, which adapts architectures within a \gls{PBT} framework.
However, SEARL is designed for off-policy \gls{RL} and thus especially shows the benefit of shared replay buffers for efficiency, whereas our method is general-purpose.

\textbf{Architecture Search.}
In \gls{RL}, \citet{czarnecki2018mixmatch} showed increasing agent complexity over time could be effective, albeit with a pre-defined schedule.
\citet{rl_darts} showed that DARTS \citep{liu2018darts} could be effective in \gls{RL}, finding high performing architectures on the Procgen benchmark.
Auto-Agent-Distiller~\citep{fu2020autoagentdistiller} deals with the problem of finding optimal architectures for compressing the model size of \gls{RL} agents, and also find that using distillation between the teacher and student networks improves stability of \gls{NAS} in \gls{RL}.
On the other hand, \gls{BO} has been used as a powerful tool for searching over large architecture spaces~\citep{casmopolitan, kandasamy2018neural, ru2020interpretable, white2019bananas,nguyen2021optimal, wan2022approximate}.
Conversely, we only consider simple \gls{MLP}s and the use of spectral normalization.
There has been initial effort~\citep{izquierdo2021bag} combining \gls{NAS} and hyperparameter optimization in sequential settings, which is distinct to our on-the-fly approach.

\textbf{Generational Training and Distillation.}
\cite{openendedlearningteam2021openended} recently introduced generational training, using policy distillation to transfer knowledge between generations, accelerating training.
Our method is based on this idea, with changing generations.
The use of distillation is further supported by \citet{igl2021transient} who recently used this successfully to adapt to non-stationarities in reinforcement learning, however keeping hyperparameters and architectures fixed.

\section{Conclusion \& Discussion}
\label{sec:conclusion}
In this paper, we propose \our: a new algorithm that significantly increases the capabilities of \gls{PBT} methods for \gls{RL}.
Using recent advances in Bayesian Optimization, \our is capable of searching over drastically larger search spaces than previous methods.
Furthermore, inspired by recent advances in generational learning, we show it is also possible to efficiently learn architectures on the fly as part of a unified algorithm.
The resulting method leads to significant performance gains across the entire  \textsc{Brax} environment suite, achieving high performance even in previously untested environments.
We believe \our is a significant step towards truly environment agnostic RL algorithms, while also offering a path towards open-ended learning where agents never stop tuning themselves and continuously expand their capabilities over time.

\textbf{Limitations \& Future Work.}
We note that while our method shows a significant boost in performance by including architectures for most environments, in some environments, such as Humanoid, we achieve better results without architecture search (Table \ref{tab:ablation}).
We hypothesize this is due to the complexity of the environment and an increased sensitivity to the network architecture.
Furthermore, while we provide a theoretical guarantee for our method in \cref{thr:gconverge-mix} for searching purely over architectures, no such guarantees exist when we transfer between architectures across generations.
Indeed, we occasionally see poor architectures being selected, which are then discarded during truncation selection.
Therefore, an immediate future direction is to address these issues and to improve the architecture selection process.
Another limitation is that while all \gls{RL}-related hyperparameters are included in the search space, certain hyperparameters of \our could also be automatically searched for, including but not limited to distillation hyperparameters, which are currently fixed, and \gls{PBT} parameters such as $t_{\mathrm{ready}}$, which could allow us to avoid myopic and greedy behavior.
Beyond these limitations, our algorithm readily transfers to other \gls{RL} problems with high-dimensional mixed spaces, and thus we would readily accommodate more complicated architecture search spaces (e.g. vision-based environments) and incorporate environment parameters~\citep{DBLP:journals/corr/PaulCOW16} into the search space to generalize to new tasks.

\textbf{Broader Impact.}
We open-source our code so that practitioners in the field can accelerate their own deployment of \gls{RL} systems.
However, in doing so, we should be wary of the risk of also improving malicious use of \gls{RL}; in particular, down-stream applications which could have an impact on people's security and privacy.
To mitigate these risks, we encourage research on \gls{RL} governance and safe \gls{RL}.
As a general purpose framework for improving any \gls{RL} algorithm, our method should be part of that conversation.
\begin{acknowledgements}
XW and BR are supported by the Clarendon Scholarship at the University of Oxford.
CL is funded by the Engineering and Physical Sciences Research Council (EPSRC). 
PB is funded through the Willowgrove Studentship.
The authors would like to thank Yee Whye Teh for detailed feedback on an early draft and the anonymous AutoML conference reviewers \& the area chair for their constructive comments which helped improve the paper.
\end{acknowledgements}

\bibliography{references}
\bibliographystyle{apalike}

\newpage
\appendix

\newpage

\section*{\LARGE \centering Supplementary Material}
\label{sec:appendix}

\section{Primer on GPs and BO}
\label{app:primer}

\paragraph{Gaussian Processes}
In Bayesian Optimization (\gls{BO}), Gaussian Processes, or \gls{GP}s, act as \emph{surrogate models} for a black-box function $f$ which takes an input  $\bz$ (in our case, the hyperparameters and/or the architecture parameters) and returns an output $y=f(\bz) + \epsilon$ where $\epsilon \sim \mathcal{N}(0,\sigma^2)$.
A \gls{GP} defines a probability distribution over functions $f$ under the assumption that any finite subset $\lbrace (\bz_i, f(\bz_i) \rbrace$ follows a normal distribution \citep{RasmussenGP}.
Formally, a \gls{GP} is denoted as $f(\bz)\sim \text{GP}\left(m\left(\bz\right),k\left(\bz,\bz'\right)\right)$, where $m\left(\bz\right)$ and $k\left(\bz,\bz'\right)$ are called the mean and covariance functions respectively, i.e. $m(\bz)=\mathbb{E}\left[f\left(\bz\right)\right]$ and $k(\bz,\bz')=\mathbb{E}\left[(f\left(\bz\right)-m\left(\bz\right))(f\left(\bz'\right)-m\left(\bz'\right))^{T}\right]$.
The covariance function (kernel) $k(\bz,\bz')$ can be thought of as a similarity measure relating $f(\bz)$ and $f(\bz')$.
There have been various proposed kernels which encode different prior beliefs about the function $f(\bz)$ \citep{RasmussenGP}. 

If we assume a zero mean prior $m(\bz)=0$, to predict $f_{*}=f\left(\bz_{*}\right)$ at a new data point $\bz_{*}$, we have,
\begin{align}
\left[\begin{array}{c}
\boldsymbol{f}\\
f_{*}
\end{array}\right] & \sim\mathcal{N}\left(0,\left[\begin{array}{cc}
\bK & \bk_{*}^{T}\\
\bk_{*} & k_{**}
\end{array}\right]\right),\label{eq:p(f|f*)}
\end{align}
 where $k_{**}=k\left(\bz_{*},\bz_{*}\right)$, $\bk_{*}=[k\left(\bz_{*},\bz_{i}\right)]_{i\le t}$, $t$ is the number of observed points for the \gls{GP},
and $\bK=\left[k\left(\bz_{i},\bz_{j}\right)\right]_{i,j\le t}$.
We denote our observations as $\{\bz_1, f_1\}, \{\bz_2, f_2\}, ..., \{\bz_t, f_t\}$ and collect all past return observations as $\mathbf{f}_t = [f_1, ..., f_t]^{\top}$.
Then, we may combine Eq. (\ref{eq:p(f|f*)}) with the fact that $p\left(f_{*}\mid\boldsymbol{f}\right)$ follows a univariate Gaussian distribution $\mathcal{N}\left(\mu\left(\bz_{*}\right),\sigma^{2}\left(\bz_{*}\right)\right)$. Given a new configuration $\bz'$, the \gls{GP} posterior mean and variance at $\bz'$ may be computed as:
\begin{equation}
\label{eqn:mu}
    \mu_t(\bz') \coloneqq \textbf{k}_t(\bz')^T(\textbf{K}_t + \sigma^2\textbf{I})^{-1}\mathbf{f}_t
\end{equation}
\begin{equation}
\label{eqn:sig}
    \sigma_t^2(\bz') \coloneqq k(\bz', \bz') - \textbf{k}_t(\bz')^T(\textbf{K}_t +\sigma^2\textbf{I})^{-1}\textbf{k}_t(\bz'),
\end{equation}
where $\mathbf{K}_t \coloneqq \{k(z_i, z_j)\}_{i, j=1}^t$ and $\mathbf{k}_t \coloneqq \{k(z_i, z'_t)\}_{i=1}^t$.

\paragraph{Bayesian Optimization} 

Bayesian optimization (\gls{BO}) is a powerful sequential approach to find the global optimum of an expensive black-box function $f(\mathbf{z})$ without making use of derivatives.
First, a surrogate model (in our case, a \gls{GP} as discussed above) is learned from the current observed data $\mathcal{D}_t= \lbrace \mathbf{z}_i, y_i \rbrace_{i=1}^t$ to approximate the behavior of $f(\mathbf{z})$.
Second, an \emph{acquisition function} is derived from the surrogate model to select new data points that maximizes information about the global optimum -- a common acquisition function that we use in our paper is the Upper Confidence Bound (\gls{UCB}) \citep{Srinivas_2010Gaussian} criterion which balances exploitation and exploration.
Specifically, the \gls{UCB} on a new, unobserved point $\bz'$ is given by:
\begin{equation}
    \mathrm{UCB}(\bz') = \mu_t(\bz') + \sqrt{\beta_t}\sigma_t(\bz'),
\end{equation}
where $\mu_t$ and $\sigma_t$ are the posterior mean and standard deviation given in Eq. \ref{eqn:mu} above and $\beta_t > 0$ is a trade-off parameter between mean and variance.
At each \gls{BO} iteration, we find a batch of samples that sequentially maximizes the acquisition function above. 
The process is conducted iteratively until the evaluation budget is depleted, and the global optimum is estimated based on all the sampled data.
In-depth discussions about \gls{BO} beyond this brief overview can be found in recent surveys \citep{Brochu_2010Tutorial,Shahriari_2016Taking,frazier2018tutorial}.

\section{Bayesian Optimization for PBT}
\label{app:bo_pbt}
In this section, we provide specific details for the modifications to \textsc{Casmopolitan} to make it amenable for our setup which consists of non-stationary reward and a mixed, high-dimensional search space.

\subsection{Kernel Design}
We use the following time-varying kernel~\citep{parkerholder2021tuning, bogunovic2016time} to measure the spatiotemporal distance between a pair of configuration vectors $\{\bz, \bz'\}$ with continuous, ordinal and/or categorical dimensions, and whose rewards are observed at timesteps $\{i, j\}$. For the most general case where we model all three types of variables, we have the following kernel function:
\begin{equation}
\label{eq:kernel}
    k(\bz, \bz', i, j) = \frac{1}{2}\Big(\big(k_x(\bx, \bx') + k_h(\bh, \bh')\big) + \big(k_x(\bx, \bx')k_h(\bh, \bh')\big)\Big) \Big( (1-\omega)^{|i-j|/2} \Big)
\end{equation}
where $\bx$ denotes the continuous \emph{and ordinal} dimensions and $\bh$ denotes the categorical dimensions of the configuration vector $\bz$, respectively, $k_x(\cdot, \cdot)$ is the kernel for continuous and ordinal inputs (by default Mat\'ern 5/2), $k_{(h)}(\cdot, \cdot)$ is the kernel for the categorical dimensions (by default the exponentiated overlap kernel in \citet{casmopolitan}) and $\omega \in [0, 1]$ controls how quickly old data is decayed and is learned jointly during optimization of the GP log-likelihood.
When the search space only contains continuous/ordinal variables, we simply have $k(\bz, \bz',i,j) = k_z(\bx, \bx') (1-\omega)^{|i-j|/2}$, and a similar simplification holds if the search space only contains categorical variables.
We improve on~\citet{parkerholder2021tuning} by directly supporting ordinal variables such as integers (for e.g. batch size) and selecting them alongside categorical variables using \emph{interleaved acquisition optimization} as opposed to time-varying bandits which scales poorly to large discrete spaces. 

\subsection{Proposing New Configurations}
\label{appsubsec:propose}
As discussed in App. \ref{app:primer}, a \gls{BO} agent selects new configuration(s) by selecting those which maximize the acquisition function (in this case, the \gls{UCB} acquisition function).
This is typically achieved via off-the-shelf first-order optimizers, which is challenging in a mixed-input space as the discrete (ordinal and categorical) variables lack gradients and na\"ively casting them into continuous variables yields invalid solutions which require rounding.
To address this issue, \citet{parkerholder2021tuning} select $\bh$ first via time-varying bandits (using the proposed TV.EXP3.M algorithm) and then select $\bx$ by optimizing the BO acquisition function, \emph{conditioned on} the chosen $\bh$.
This method scales poorly to spaces with a large number of categorical choices, as bandit problems generally require pulling each arm at least once.
Instead, we develop upon \emph{interleaved acquisition optimization} introduced in \citet{casmopolitan} which unifies all variables under a single GP, and alternates between optimization of the continuous and discrete variables:

\begin{algorithm}[H]
\begin{footnotesize}
    \caption{Interleaved optimization of $\mathrm{acq}(\bz)$}
    \label{alg:interleaved_optimisation}
	\begin{algorithmic}[1]
	\WHILE{not converged}
	\STATE \textbf{Continuous:} Do a single step of gradient descent on the continuous dimensions.
	\STATE \textbf{Ordinal and Categorical}: Conditioned on the new continuous values, do a single step of local search: randomly select an ordinal/categorical variable and choose a different (categorical), or an adjacent (ordinal) value, if the new value leads to an improvement in $\mathrm{acq}(\cdot)$.
	\ENDWHILE
\end{algorithmic}
\end{footnotesize}
\end{algorithm}

Compared to the approach in \citet{casmopolitan}, we include ordinal variables, which are optimized alongside the categorical variables via local search during acquisition optimization but are treated like continuous variables by the kernel.
During acquisition, we define adjacent ordinals to be the neighboring values. For example, for an integer variable with a valid range $[1, 5]$ and current value $3$, its neighboring values are $2$ and $4$.
This allows us to exploit the natural ordering for ordinal variables whilst still ensuring that suggested configurations remain local and only explore \emph{valid} neighboring solutions.

\subsection{Suggesting New Architectures}
\label{appsubsec:new_archs}
At the start of each generation for the full \our method, we have to suggest a pool of new architectures. For the first generation, we simply use random sampling across the joint space $\mathcal{J}$ to fill up the initial population. For subsequent generations, we use a combination of \gls{BO} and random sampling to both leverage information already gained from the architectures and allow sufficient exploration. For the \gls{BO}, at the start of the $i$-th generation, we first fit a \gls{GP} model solely in the architecture space $\mathcal{Y}$, by using the architectures from the $i-1$-th generation as the training data. Since these network architectures are trained with different hyperparameters during the generation, we use the \emph{best return} achieved on each of these architectures as the training targets. We then run \gls{BO} on this \gls{GP} to obtain the suggestions for new architectures for the subsequent generation. In practice, to avoid occasional instability in the distillation process, we find it beneficial to select a number of architectures larger than $B$: we then start the distillation for all the agents, but use successive halving \citep{pmlr-v28-karnin13} such that only $B$ agents survive and are distilled for the full budget allocated. By doing so, we trade a modest increase in training steps for greatly improved stability in distillation. 

\subsection{Details on Trust Regions}
\label{appsubsec:restart_global}

To define trust regions for our time-varying objective, we again consider the most general case where the search space contains both categorical and continuous/ordinal dimensions. Given the configuration $\bz^*_{t} = [\bh^*_t, \bx^*_t] = \arg \max_{\bz_t} (f_t)$ with the best return
at time $t$, we may define the trust region centered around $\bz^*_{t}$:
\begin{equation}
    \mathrm{TR}(\bz^*_T) = \begin{cases}
    \big\{ \bh \mid \frac{1}{d_h}\sum_{i=1}^{d_h} \delta(h_i, h^*_i) \leq L_h \big\} & \text{for categorical $\bh^*_T = \{h^*_i\}_{i=1}^{d_h}$ } \\
    \big\{ \bx \mid |x_i - x^*_i | < \frac{\Tilde{\ell}_i}{\prod_{i=1}^{d_x} \Tilde{\ell}_i^{\frac{1}{d_x}}} L_x, 0 \leq x_i \leq 1 \big\}  & \text{for continuous or ordinal $\bx^*_T = \{x^*_i\}_{i=1}^{d_x}$} , \\
    \end{cases}
\end{equation}
where $\delta(\cdot, \cdot)$ is the Kronecker delta function, $L_h \in [0, 1]$ is the trust region radius defined in terms of normalized Hamming distance over the categorical variables, $L_x$ is the trust region radius defining a hyperrectangle over the continuous and ordinal variables, and $\{\Tilde{\ell_i} = \frac{\ell_i}{\frac{1}{d_x}\sum_{i=1}^{d_x}\ell_i} \}_{i=1}^{d_x}$ are the normalized lengthscales $\{\ell_i\}$ learned by the GP surrogate over the continuous/ordinal dimensions.
This means that the more sensitive hyperparameters, i.e. those with smaller learned lengthscales, will automatically be assigned smaller trust region radii.

For the restart of trust regions when either or both of the trust regions defined fall below some pre-defined threshold, we adapt the \gls{UCB}-based criterion proposed in~\citet{casmopolitan} to the time-varying setting to re-initialize the population when a restart is triggered.
For the $i$-th restart, we consider a global, auxiliary GP model trained on a subset of observed configurations and returns $D^*_{i-1} = \{\bz^*_j, f^*_j\}_{j=1}^{i}$ and denote $\mu_g(\bz; D^*_{i-1})$ and $\sigma_g^2(\bz; D^*_{i-1})$ as the posterior mean and variance of the auxiliary GP. The new trust region center is given by the configuration $\bz_i^{(0)}$ that maximizes the UCB score: $\bz_i^{(0)} = \arg \max_{\bz \in \mathcal{Z}} \mu_g(\bz; D^*_{i-1}) + \sqrt{\beta_i} \sigma_g(\bz; D^*_{i-1})$ where $\beta_i$ is the UCB trade-off parameter.
In the original \textsc{Casmopolitan}, $D^*$ consists of the best configurations in all previous restarts $1, ..., i-1$, which is invalid for the time-varying setting. Instead, we construct $D^*_{i-1}$ using the following:
\begin{equation}
    D^*_{i-1} = \{\bz_j^*, \mu_T(\bz_j^*)\}_{j=1}^{i=1} \text{ where } \bz_j^* = \arg \max_{\bz_j \in \mathcal{D}_j } \mu_T(\bz_{j}),
\end{equation}
where $\mathcal{D}_j$ denotes the set of previous configurations evaluated during the $j$-th restart and $\mu_T(\cdot)$ denotes the posterior mean of the time-varying GP surrogate \emph{at the present timestep $t=T$}. Thus, instead of simply selecting the configurations of each restart that \emph{led} to the highest observed return, we select the configurations that \emph{would have led to the highest return if they were evaluated now}, according to the GP surrogate. Such a configuration preserves the convergence property of \our (without distillation and architecture search) shown in \cref{thr:gconverge-mix} and proven below in App. \ref{app:proofs}.

\section{Theoretical Guarantees}
\label{app:proofs}

\subsection{Bound on the Maximum Information Gain}

We start by deriving the maximum information gain, which extends the result presented in \citet{casmopolitan} for the time-varying setting. Note that this result is defined over the number of local restarts $I$. 

\begin{theorem} \label{thr:tv_kernel_mig}
Let $\gamma(I;k;V) := \max_{A \subseteq V, \vert A \vert \leq I} \frac{1}{2} \log \vert \idenmat + \sigma^{-2} \lbrack k(\mathbf{v}, \mathbf{v}') \rbrack_{\mathbf{v}, \mathbf{v}' \in A} \vert$ be the maximum information gain achieved by sampling $I$ points in a \gls{GP} defined over a set $V$ with a kernel  $k$. Denote the constant $\eta := \prod_{j=1}^{d_h} n_j$. Then we have, for the time-varying mixed kernel $k$, 
\begin{equation}
\gamma(I;k;[\mathcal{H}, \mathcal{X}]) \lessapprox
 \frac{I}{\tilde{N}}\left(\lambda \eta\gamma(I;k_x;\mathcal{X}) + \\
(\eta-2\lambda)\log I+\sigma_{f}^{-2}\tilde{N}^{3}\omega\right)    
\end{equation}
where the time steps $\left\{ 1,...,I\right\}$ are split into  into $I/\tilde{N}$
blocks of length $\tilde{N}$, such that the function
$f_{t}$ does not vary significantly within each block.
\end{theorem}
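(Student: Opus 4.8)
The plan is to obtain the bound by composing two independent reductions: the time-varying reduction of \citet{bogunovic2016time}, which trades the full horizon of $I$ restarts for a sum over short blocks plus an $\omega$-dependent correction, and the mixed-kernel reduction of \citet{casmopolitan}, which controls the information gain of the combined categorical-continuous kernel in terms of that of the purely continuous kernel $k_x$. Throughout I write the time-varying mixed kernel from \cref{eq:kernel} as $k(\bz,\bz',i,j) = k_{\mathrm{mix}}(\bz,\bz')\,(1-\omega)^{|i-j|/2}$, where $k_{\mathrm{mix}} = \tfrac12(k_x+k_h)+\tfrac12 k_x k_h$ is the spatial part, so that the temporal decay factorizes cleanly out of the spatial dependence.

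First I would handle the temporal structure. Partitioning $\{1,\dots,I\}$ into $I/\tilde N$ consecutive blocks of length $\tilde N$, the key observation (exactly as in \citet{bogunovic2016time}) is that within a single block the decay factor $(1-\omega)^{|i-j|/2}$ deviates from $1$ by at most order $\tilde N \omega$, so the per-block covariance matrix is a bounded perturbation of the block-constant spatial kernel matrix. Bounding the log-determinant of the full covariance by the sum of the per-block log-determinants and collecting the perturbation error yields a residual of the form $\sigma_{f}^{-2}\tilde N^{3}\omega$ per block; this is precisely the mechanism producing the additive $\omega$-correction, and the prefactor $I/\tilde N$ simply counts the blocks. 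The assumption that $f_t$ varies slowly within a block is what keeps this perturbation controlled and is the reason the final statement is an approximate ($\lessapprox$) inequality.

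Next, within each block the kernel reduces (up to the controlled perturbation above) to the time-invariant $k_{\mathrm{mix}}$, and I would invoke the mixed-space information-gain bound of \citet{casmopolitan}. The finiteness of the categorical domain is essential here: $k_h$ has rank at most $\eta=\prod_{j=1}^{d_h} n_j$, so its spectrum contributes only finitely many directions. For the additive component $k_x+k_h$, subadditivity of information gain across summed kernels bounds its contribution by $\gamma(\cdot;k_x;\mathcal{X})$ plus a finite-rank term of order $\eta\log I$. For the multiplicative component $k_x k_h$, the Hadamard/tensor structure effectively replicates the continuous spectrum across the $\eta$ categorical directions, producing the leading $\eta\,\gamma(\cdot;k_x;\mathcal{X})$ contribution. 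A Srinivas-style eigenvalue split \citep{Srinivas_2010Gaussian} that retains the $\lambda$ dominant spectral directions and bounds the tail logarithmically then assembles these into the per-block bound $\lambda\eta\,\gamma(\tilde N;k_x;\mathcal{X}) + (\eta-2\lambda)\log \tilde N$; applying monotonicity $\gamma(\tilde N;\cdot)\leq\gamma(I;\cdot)$ and $\log\tilde N\leq\log I$, and summing over the $I/\tilde N$ blocks together with the temporal correction, gives the stated inequality.

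The main obstacle I anticipate is the simultaneous control of the two reductions, specifically the multiplicative component $k_x k_h$ interacting with the temporal decay. The additive part is routine because information gain is subadditive across summed kernels, but the product requires bounding the spectrum of a Hadamard product, and one must verify that the block-wise perturbation argument of the first step survives after this product is taken. Tracking the constants $\lambda$ and $\eta$ consistently through both reductions — in particular choosing the eigenvalue split compatibly with the block length $\tilde N$ so that the $\tilde N^{3}\omega$ residual is not inflated — is where the argument is most delicate and is the crux of extending the static \textsc{Casmopolitan} bound to the time-varying regime.
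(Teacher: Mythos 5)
Your proposal follows essentially the same route as the paper's proof: first apply the blocking argument of \citet{bogunovic2016time} to reduce the time-varying information gain to $(I/\tilde{N}+1)\bigl(\tilde{\gamma}_{\tilde{N}}+\sigma_f^{-2}\tilde{N}^3\omega\bigr)$, then bound the per-block time-invariant mixed-kernel information gain $\tilde{\gamma}_{\tilde{N}}$ via the \textsc{Casmopolitan} result and combine. The paper invokes both ingredients as black boxes whereas you additionally sketch their internal mechanisms, but the decomposition and the way the terms are assembled are identical.
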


\begin{proof}
Following the proof used in \cite{bogunovic2016time}), we split the time steps $\left\{ 1,...,I\right\} $ into $I/\tilde{N}$
blocks of length $\tilde{N}$, such that within each block the function
$f_{i}$ does not vary significantly. Then, we have that the maximum information gain of the time-varying kernel \cite{bogunovic2016time}) is bounded by 
\begin{align*}
\gamma_{I} & \le\left(\frac{I}{\tilde{N}}+1\right)\left( \tilde{ \gamma}_{\tilde{N}}+\sigma_{f}^{-2}\tilde{N}^{3}\omega\right)
\end{align*}
where $\omega \in [0,1]$ is the forgetting-remembering trade-off parameter, and we consider the kernel for time $1-k_{time}(t,t')\le\omega\left|t-t'\right|$.
We denote $\tilde{ \gamma}_{\tilde{N}}$ as the maximum information gain for the time-invariant kernel counterpart in each block length of $\tilde{N}$.

Next, by using the bounds for the (time-invariant) mixed kernel in \cite{casmopolitan} that $\tilde{ \gamma}_{\tilde{N}} \le \mathcal{O} \big( (\lambda \eta + 1 - \lambda)\gamma(I;k_x;\mathcal{X}) + (\eta+2-2\lambda)\log I \big)$, we get the new time-varying bound $\gamma(I;k;[\mathcal{H}, \mathcal{X}]) \lessapprox
 \frac{I}{\tilde{N}}\left(\lambda \eta\gamma(I;k_x;\mathcal{X}) + \\
(\eta-2\lambda)\log I+\sigma_{f}^{-2}\tilde{N}^{3}\omega\right)$ where we have suppressed the constant term for simplicity.
\end{proof}

\subsection{Proof of Local Convergence in Each Trust Region}

\begin{assumption} \label{assu:f-bounded}
The time-varying objective function $f_t(\mathbf{z})$ is bounded in $[\mathcal{H}, \mathcal{X}]$, i.e. $\exists F_l, F_u \in \mathbb{R} : \forall \mathbf{z} \in [\mathcal{H}, \mathcal{X}]$, $F_l \leq  f_t(\mathbf{z}) \leq F_u, \forall t \in [1,...,T]$.
\end{assumption}

\begin{assumption} \label{assu:gp-approx} Let us denote $L^h_{\min}$, $L^x_{\min}$ and $L_0^h, L_0^x$ be the minimum and initial \gls{TR} lengths for the categorical and continuous variables, respectively. Let us also denote $\alpha_s$ as the shrinking rate of the \gls{TR}s. The local \gls{GP} approximates $f_t, \forall t \le T$ accurately within any \gls{TR} with length $L^x \leq \max \big(L^x_{\min}/\alpha_s, L_0^x (\lceil (L^h_{\min}+1)/\alpha_s \rceil -1)/L_0^h \big)$ and $L^h \leq \max \big(\lceil (L^h_{\min}+1)/\alpha_s \rceil - 1, \lceil L_0^h L^x_{\min}/ (\alpha_s L_0^x) \rceil \big)$.
\end{assumption}

\begin{theorem} \label{thr:local-converge}
Given Assumptions \ref{assu:f-bounded} \& \ref{assu:gp-approx}, after a restart, \ourNAS converges to a local maxima after a finite number of iterations or converges to the global maximum.
\end{theorem}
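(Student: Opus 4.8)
The plan is to adapt the static local-convergence argument of \citet{casmopolitan} (their Theorem 3.4) to the time-varying setting by replacing every reference to a raw observed return with the value $\mu_T(\bz)$ of the time-varying \gls{GP} posterior mean \emph{evaluated at the current timestep} $T$, exactly as in the restart criterion of App.~\ref{appsubsec:restart_global}. With this device the ``best value'' seen by the algorithm becomes a well-defined deterministic quantity at each iteration, and the whole proof reduces to analysing the \gls{TR} radius dynamics after a restart. The central observation is a dichotomy: after a restart the radii $L^x, L^h$ evolve by a fixed expansion factor on consecutive ``successes'' (a proposed $\bz$ lands in the top $q\%$) and by the shrink factor $\alpha_s$ on consecutive ``failures'', and either the radii eventually fall below the restart thresholds $L^x_{\min}, L^h_{\min}$, or they do not.

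First I would handle the shrinking branch, which gives finite-iteration convergence to a local maximum. Since every shrink multiplies a radius by $\alpha_s < 1$ and the number of iterations between two consecutive shrinks is capped by the fixed consecutive-failure tolerance, reaching a radius below $L^x_{\min}$ (resp.\ $L^h_{\min}$) from the initial $L_0^x$ (resp.\ $L_0^h$) requires at most $\lceil \log_{\alpha_s}(L^x_{\min}/L_0^x) \rceil$ shrink events, hence only finitely many iterations. Once the radii are this small, \cref{assu:gp-approx} applies verbatim: the local \gls{GP} approximates $f_t$ accurately throughout the \gls{TR} for all $t \le T$, so maximising the \gls{UCB} acquisition over the \gls{TR} returns the maximiser of $f_T$ restricted to the \gls{TR}, i.e.\ a local maximum. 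This establishes the ``local maxima after a finite number of iterations'' alternative, after which a fresh restart is triggered.

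For the non-shrinking branch I would argue asymptotic convergence to the global maximum. If the radii never drop below threshold, there must be infinitely many ``successes'', each of which, by truncation selection, replaces the incumbent with a configuration of at least equal current value; hence the incumbent-value sequence $y_n^\star := \mu_T(\bz_n^\star)$ is monotone non-decreasing, and by \cref{assu:f-bounded} it is bounded above by $F_u$, so it converges to some limit $y^\star$ by the monotone convergence theorem. To identify $y^\star$ with the global optimum I would use that repeated successes expand the \gls{TR}, so it does not collapse and \cref{assu:gp-approx} keeps the \gls{GP} accurate on a region of non-vanishing size; the no-regret property of the \gls{UCB} rule under the time-varying kernel — the same sublinear-regret mechanism made precise in \cref{thr:gconverge-mix} — then forces the incumbents to accumulate at a global maximiser of $f_T$, giving $y^\star = \max f_T$.

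The main obstacle is the interaction between the \emph{moving} objective and the \gls{TR}/\gls{GP}-accuracy assumption. In the static case of \citet{casmopolitan} ``the local maximum'' is fixed, whereas here both the incumbent value and the target $f_t$ drift with $t$; the monotonicity of $y_n^\star$ and the applicability of \cref{assu:gp-approx} must therefore be phrased in terms of the present-time posterior mean $\mu_T$ rather than raw observations, and one must check that the forgetting factor $\omega$ in the time-varying kernel still permits accurate local approximation of $f_t$ for every $t \le T$ inside a sufficiently small \gls{TR}. This is precisely what the carefully matched length thresholds in \cref{assu:gp-approx} encode, and verifying that they are consistent with the geometric shrink schedule — so that the finite shrink count above remains valid — is the one place where the time-varying analysis genuinely departs from the static proof.
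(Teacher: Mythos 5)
Your two-branch dichotomy (trust regions eventually shrink below threshold vs.\ not) is exactly the paper's Case~2 / Case~1 split, and the ingredients you use in each branch --- a bounded number of shrink events governed by \texttt{fail\_tol} and $\alpha_s$, \cref{assu:gp-approx} for the terminating branch, and the monotone convergence theorem plus \cref{assu:f-bounded} for the non-terminating branch --- are the same ones the paper uses. Two of your steps, however, depart from the paper's proof in ways that do not hold up. First, in the non-shrinking branch you identify the monotone limit with the global maximum by appealing to ``the same sublinear-regret mechanism made precise in \cref{thr:gconverge-mix}.'' That is circular: the proof of \cref{thr:gconverge-mix} bounds $f_i(\bz_i^{**}) - f_i(\bz_i^{*})$ where $\bz_i^{*}$ is the local maximum returned by the $i$-th restart, i.e.\ it presupposes the present theorem, and its regret machinery lives on the global auxiliary \gls{GP} over the restart incumbents $D^*_{i-1}$, not on the within-restart surrogate. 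Your supporting claim that ``\cref{assu:gp-approx} keeps the \gls{GP} accurate on a region of non-vanishing size'' as the \gls{TR} expands also reads the assumption backwards: it guarantees accuracy only for \gls{TR}s \emph{at or below} the stated (small) lengths and gives nothing for an expanding region. The paper's Case~1 needs none of this --- it applies the monotone convergence theorem directly to the block-wise maxima $f(\mathbf{z}^k)$ over windows of $N_{\min}$ iterations, using only boundedness.

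Second, in the terminating branch you conclude that the \gls{UCB} maximizer over a small, accurately modelled \gls{TR} ``returns the maximiser of $f_T$ restricted to the \gls{TR}, i.e.\ a local maximum''; a restricted maximizer can sit on the \gls{TR} boundary without being a local maximum of $f_T$. The paper argues the contrapositive instead: it first computes the largest \gls{TR} lengths that can still trigger termination after one more shrink, notes that \cref{assu:gp-approx} guarantees surrogate accuracy at exactly those lengths, and then observes that if the \gls{TR} center were \emph{not} a local maximum the accurate surrogate would produce an improving point (a success), preventing termination; hence the center at termination must be a local maximum. Replacing your two steps with these two arguments recovers the paper's proof. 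The reframing of ``best value'' in terms of $\mu_T$ is harmless but unnecessary here, since the paper reserves that device for the restart criterion and for \cref{thr:gconverge-mix}, and works with function values directly in this proof.
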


\begin{proof}
We may apply the same proof by contradiction used in \cite{casmopolitan} for our time-varying setting, given the assumptions \ref{assu:f-bounded} and \ref{assu:gp-approx}. For completeness, we summarize it below.

We show that our algorithm converges to (1) to a global maximum of $f$ (if does not terminate after a finite number of iterations) or (2) a local maxima of $f$ (if terminated after a finite number of iterations). 

Case 1: when $t\rightarrow \infty$ and the \gls{TR} lengths $L^h$ and $L^x$ have not shrunk below $L^h_{\min}$ and $L^x_{\min}$.
From the algorithm description, the \gls{TR} is shrunk after \texttt{fail\_tol} consecutive failures. Thus, if after $N_{\min} = \texttt{fail\_tol} \times m$ iterations where $m = \max(\lceil \log_{\alpha_e}(L^h_0/L^h_{\min}) \rceil, \lceil \log_{\alpha_e}(L^x_0/L^x_{\min}) \rceil)$, there is no success, \ourNAS terminates. This means, for case (1) to occur, \ourNAS needs to have at least one improvement per $N_{\min}$ iterations. Let consider the increasing series $\lbrace f(\mathbf{z}^k) \rbrace_{k=1}^{\infty}$ where $f(\mathbf{z}^k)= \max_{t=(k-1)N_{\min}+1,\dots, kN_{\min}} \lbrace f(\mathbf{z}_t) \rbrace$ and $f(\mathbf{z}_i)$ is the function value at iteration $t$. Thus, using the monotone convergence theorem \citep{bibby1974axiomatisations}, this series converges to the global maximum of the objective function $f$ given that $f(\mathbf{z})$ is bounded (Assumption \ref{assu:f-bounded}). 

Case 2: when \ourNAS terminates after a finite number of iterations, \ourNAS converges to a local maxima of  $f(\mathbf{z})$ given Assumption \ref{assu:gp-approx}. Let us remind that \ourNAS terminates when either the continuous \gls{TR} length $\leq L^x_{\min}$ or the categorical \gls{TR} length $\leq L^h_{\min}$. 

Let $L_s$ be the largest \gls{TR} length that after being shrunk, the algorithm terminates, i.e., $\lfloor \alpha_s L_s \rfloor \leq L^h_{\min}$.\footnote{The operator $\lfloor . \rfloor$ denotes the floor function} Due to $\lfloor \alpha_s L_s \rfloor \leq \alpha_s L_s < \lfloor \alpha_s L_s \rfloor +1$, we have $ L_s < (L^h_{\min}+1)/\alpha_s$. Because $L_s$ is an integer, we finally have $L_s \leq \lceil (L^h_{\min}+1)/\alpha_s \rceil - 1$. This means that $L_s= \lceil (L^h_{\min}+1)/\alpha_s \rceil - 1$ is the largest \gls{TR} length that after being shrunk, the algorithm terminates. We may apply a similar argument for the largest \gls{TR} length (before terminating) for the continuous  $L^x_{\min}/\alpha_s$.

In our mixed space setting, we have two separate trust regions for categorical and continuous variables. When one of the \gls{TR} reaches its terminating threshold ($L^x_{\min}/\alpha_s$ or $\lceil (L^h_{\min}+1)/\alpha_s \rceil -1$), the length of the other one is ($\lceil L_0^h L^x_{\min}/ (\alpha_s L_0^x) \rceil \big)$ or $L_0^x (\lceil (L^h_{\min}+1)/\alpha_s \rceil -1)/L_0^h$).
Based on \cref{assu:gp-approx}, the \gls{GP} can accurately fit a \gls{TR} with continuous length $L^x \leq \max \big(L^x_{\min}/\alpha_s, L_0^x (\lceil (L^h_{\min}+1)/\alpha_s \rceil -1)/L_0^h \big)$ and $L^h \leq \max \big(\lceil (L^h_{\min}+1)/\alpha_s \rceil - 1, \lceil L_0^h L^x_{\min}/ (\alpha_s L_0^x) \rceil \big)$.
Thus, if the current \gls{TR} center is not a local maxima, \ourNAS can find a new data point whose function value is larger than the function value of current \gls{TR} center.
This process occurs iteratively until a local maxima is reached, and \ourNAS terminates.
\end{proof}

\subsection{Proof of Theorem \ref{thr:gconverge-mix}}

\begin{proof}
\label{sec:proof-mixconverge}
Under the time-varying setting, at the $i$-th restart, we first fit the global time-varying \gls{GP} model on a subset of data $D^*_{i-1}= \lbrace \mathbf{z}^*_j, f(\mathbf{z}^*_j) \rbrace_{j=1}^{i-1}$, where $\mathbf{z}^*_j$ is the local maxima found after the $j$-th restart, or, a random data point, if the found local maxima after the $j$-th restart is same as in the previous restart.

Let $\bz_{i}^{**}=\arg\max_{\forall\bz \in [\mathcal{H}, \mathcal{X}]}f_{t}(\bz)$ \footnote{Notationally, at the $i$-th restart, $\bz_{i}^{**}$ is the global optimum location while  $\bz_{i}^{*}$ is the local maxima found by \ourNAS.} be the global optimum location at time step $i$. Let $\mu_{gl}(\mathbf{z}; D^*_{i-1})$ and $\sigma^2_{gl}(\mathbf{z}; D^*_{i-1})$ be the posterior mean and variance of the global \gls{GP} learned from $D^*_{i-1}$. Then, at the $i$-th restart, we select the following location $\mathbf{z}^{(0)}_i$ as the initial centre of the new \gls{TR}:
\begin{align} \nonumber
    \mathbf{z}^{(0)}_i= \arg \max_{\mathbf{z} \in [\mathcal{H}, \mathcal{X}]} \mu_{gl}(\mathbf{z}; D^*_{i-1}) + \sqrt{\beta_i} \sigma_{gl}(\mathbf{z}; D^*_{i-1}),
\end{align}
where $\beta_i$ is the trade-off parameter in \gls{PB2} \citep{pb2}. 

We follow \cite{casmopolitan} to assume that at the $i$-th restart, there exists a function $g_i(\mathbf{z})$: (a) lies in the RKHS $\mathcal{G}_k(\lbrack \mathcal{H}, \mathcal{X} \rbrack)$ and $\Vert g_i \Vert_k^2 \leq B$, (b) shares the same global maximum $\mathbf{z}^*$ with $f$, and, (c) passes through all the local maxima of $f$ and any data point $\mathbf{z'}$ in $\mathcal{D}^*_{i-1} \cup \lbrace \mathbf{z}^{(0)}_i \rbrace$ which are not local maxima (i.e. $g_i(\mathbf{z'}) = f(\mathbf{z'}), \forall \mathbf{z'} \in D^*_{i-1} \cup \lbrace \mathbf{z}^{(0)}_i \rbrace$). In other words, the function $g_i(\mathbf{z})$ is a function that passes through the maxima of $f$ whilst lying in the RKHS $\mathcal{G}_k(\lbrack \mathcal{H}, \mathcal{X} \rbrack)$ and satisfying $\Vert g_i \Vert_k^2 \leq B$. %

Using $\beta_i$ defined in Theorem 2 in \citet{Srinivas_2010Gaussian} for function $g_i$, $\forall i$, $\forall z \in \lbrack \mathcal{H}, \mathcal{X} \rbrack$, we have,
\begin{equation} \label{eq:sriniva-tr6}
\begin{aligned}
    & \text{Pr} \lbrace \vert \mu_{gl}(\mathbf{z}; D_{i-1}^*) - g_i(\mathbf{z}) \vert \leq \sqrt{\beta_i} \sigma_{gl}(\mathbf{z}; D_{i-1}^*) \vert  \rbrace \geq 1-\zeta.
\end{aligned}
\end{equation}
In particular, with probability $1-\zeta$, we have that,
\begin{equation}
\begin{aligned}
 \mu_{gl}(\mathbf{z}^{(0)}_i; D^*_{i-1}) + \sqrt{\beta_i} \sigma_{gl}(\mathbf{z}^{(0)}_i; D^*_{i-1}) 
 \geq \mu_{gl}(\mathbf{z}_i^{**}; D^*_{i-1}) + \sqrt{\beta_i} \sigma_{gl}(\mathbf{z}_i^{**}; D^*_{i-1}) \geq g_i(\mathbf{z}_i^{**}).
\end{aligned}
\end{equation}
Thus, $\forall i$, with probability $1-\zeta$ we have
\begin{equation} \nonumber
\begin{aligned}
& g_i(\mathbf{z}_i^{**}) - g_i(\mathbf{z}^{(0)}_i) \leq \mu_{gl}(\mathbf{z}^{(0)}_i; D^*_{i-1}) + \sqrt{\beta_i} \sigma_{gl}(\mathbf{z}^{(0)}_i; D^*_{i-1}) - g_i(\mathbf{z}^{(0)}_i) 
 \leq 2\sqrt{\beta_i} \sigma_{gl}(\mathbf{z}^{(0)}_i; D^*_{i-1}).
\end{aligned}
\end{equation}
Since $g_i(\mathbf{z}^{(0)}_i) = f(\mathbf{z}^{(0)}_i)$, and $g_i(\mathbf{z}_i^{**})=f(\mathbf{z}_i^{**})$, hence, $f_i(\mathbf{z}_i^{**}) - f(\mathbf{z}^{(0)}_i) \leq 2\sqrt{\beta_i} \sigma_{gl}(\mathbf{z}^{(0)}_i; D^*_{i-1})$ with probability $1-\zeta$. With $\mathbf{z}_i^*$ as the local maxima found by \ourNAS at the $i$-th restart. As $f(\mathbf{z}^{(0)}_i) \leq f(\mathbf{z}_i^*)$, we have,
\begin{align} 
    f_i(\mathbf{z}_i^{**}) - f_i(\mathbf{z}_i^*) \leq 2\sqrt{\beta_i} \sigma_{gl}(\mathbf{z}^{(0)}_i; D^*_{i-1}).  \label{eq:bounded_fz**_fz*}
\end{align}
Let $\bz_{i,b}$ be the point chosen by our algorithm at iteration $i$ and batch element $b$, we follow \cite{pb2} to define the time-varying instantaneous regret as $r_{i,b}=f_{i}(\bz_{i}^{**})-f_{i}(\bz_{i,b})$. 
Then, the time-varying batch instantaneous regret over $B$ points is as follows
\begin{align}
r_{i}^{B} & =\min_{b\le B}r_{i,b}=\min_{b\le B}f_{i}(\bz_{i}^{**})-f_{i}(\bz_{i,b}),\forall b\le B
\end{align}
Using \cref{eq:bounded_fz**_fz*} and Theorem 2 in \cite{pb2}, we bound the cumulative batch regret over $I$ restarts and $B$ parallel agents
\begin{align}
R_{IB} & = \sum_{i=1}^{I}r_{i}^{B}  
 \le\sqrt{\frac{C_1 I\beta_{I}}{B}
 \gamma(IB;k;[\mathcal{H}, \mathcal{X}])}+2\label{eq:R_T_last_step}
\end{align}
where $C_{1}=32/\log(1+\sigma_{f}^{2})$, $\beta_I$ is the explore-exploit hyperparameter defined in Theorem 2 in \cite{pb2} and $\gamma(IB;k;[\mathcal{H}, \mathcal{X}]) \lessapprox
\frac{ IB}{\tilde{N}}\left(\lambda \eta\gamma(I;k_x;\mathcal{X}) + \\
(\eta-2\lambda)\log IB+\sigma_{f}^{-2}\tilde{N}^{3}\omega\right)$ is the maximum information gain defined over the mixed space of categorical and continuous $[\mathcal{H}, \mathcal{X}]$ in the time-varying setting defined in Theorem \ref{thr:tv_kernel_mig}.

\end{proof}

We note that given \cref{thr:gconverge-mix}, if we use the squared exponential kernel over the continuous variables, $\gamma(\tilde{N}B;k;\mathcal{X})=\mathcal{O}(\left[\log\tilde{N}B\right]^{d+1})$ \citep{Srinivas_2010Gaussian},
the bound becomes $R_{IB}\le\sqrt{\frac{C_{1}I^2\beta_{I}}{\tilde{N}}
 \left(\lambda \eta \left[\log\tilde{N}B\right]^{d+1} + \\
(\eta-2\lambda)\log IB+\sigma_{f}^{-2}\tilde{N}^{3}\omega\right) }+2$
where $\tilde{N}\le I$, $B\ll T$ and $\omega\in [0,1]$.

\section{Full PPO Hyperparameter Search Space}
\label{app:full_search_space}
We list the full search space for PPO in \cref{tab:ppo_hypers}.
The architecture and hyperparameters form the full 15-dimensional mixed search space. For methods that do not search in the architecture space (e.g., \gls{PBT}, \gls{PB2}, random search baselines in $\mathcal{Z}$, and the partial \our in Ablation Studies that uses §\ref{subsec:bo} only), the last 6 dimensions are fixed to the default architecture used in \textsc{Brax}: a policy network with 4 hidden layers each containing 32 neurons, and a value network with 5 hidden layers each containing 256 neurons. By default, spectral normalization is disabled in both networks.

\begin{table}[h]
\centering
\vspace{-4mm}
\caption{The hyperparameters for PPO form a 15-dimensional mixed search space.}
\vspace{-2mm}
\label{tab:ppo_hypers}
\begin{tabular}{@{}lll@{}}
\toprule
Hyperparameter                 & Type        & Range        \\ \midrule
learning rate                  & log-uniform & [1e-4, 1e-3] \\
discount factor ($\gamma$)     & uniform     & [0.9, 0.9999]\\
entropy coefficient (c)        & log-uniform & [1e-6, 1e-1] \\
unroll length                  & integer     & [5, 15]      \\
reward scaling                 & uniform     & [0.05, 20]   \\
batch size                     & integer (power of 2)   & [32, 1024]   \\
no. updates per epoch          & integer     & [2, 16]             \\
GAE parameter ($\lambda$)      & uniform     & [0.9, 1]             \\
clipping parameter ($\epsilon$)& uniform            & [0.1, 0.4]             \\
$\pi$ network width            & integer (power of 2) & [32, 256]             \\
$\pi$ network depth            & integer           & [1, 5]             \\
$\pi$ use spectral norm        & binary            & [True, False]             \\
$V$ network width              & integer (power of 2) & [32, 256]  \\
$V$ network depth              & integer            & [1, 5]             \\
$V$ use spectral norm          & binary            & [True, False]             \\ \bottomrule
\end{tabular}
\end{table}

\begin{table}[h]
\centering
\caption{Hyperparameters for \our.}
\vspace{-2mm}
\label{tab:bgpbt_hypers}
\begin{tabular}{@{}lll@{}}
\toprule
Hyperparameter                & Value  & Description \\ \midrule
$B$              & 8 & Population size (number of parallel agents)\\
$q$  & 12.5 & \% agents replaced each iteration ($q$) \\
$t_\textrm{max}$              & 150M & Total timesteps \\
$\alpha_{\textrm{RL}}$        & 1 & RL weight \\
$\alpha_{V}$                  & 0 & Value function weight \\
$\alpha_{\pi}$                & 5 & Policy weight \\
\bottomrule
\end{tabular}
\vspace{-2mm}
\end{table}
~
\begin{table}[h]
\centering
\vspace{-2mm}
\footnotesize{
\caption{Hyperparameters for \our inherited from \textsc{Casmopolitan}}
\vspace{-2mm}
\label{tab:bgpbt_hypers_casmo}
\begin{tabular}{lll}
\toprule
Hyperparameter                & Value & Description  \\
\midrule
\gls{TR} multiplier & 1.5 & multiplicative factor for each expansion/shrinking of the \gls{TR}. \\
\texttt{succ\_tol} & 3 & number of consecutive successes before expanding the \gls{TR} \\
\texttt{fail\_tol} & 10 & number of consecutive failures before shrinking the \gls{TR} \\
Min. continuous \gls{TR} radius & 0.15 & min. \gls{TR} of the continuous/ordinal variables before restarting \\
Min. categorical \gls{TR} radius & 0.1 & min. \gls{TR} of the categorical variables before restarting \\
Init. continuous \gls{TR} radius & 0.4 & initial \gls{TR} of the continuous/ordinal variables \\
Init. categorical \gls{TR} radius & 1 & initial \gls{TR} of the categorical variables \\
\bottomrule
\end{tabular}
}
\vspace{-4mm}
\end{table}

\section{Implementation Details}
\label{app:implementation_details}
We list the hyperparameters for our method \our in \cref{tab:bgpbt_hypers}.
Since \our uses the \textsc{Casmopolitan} \gls{BO} agent, it also inherits the default hyperparameters from \cite{casmopolitan} which are used in all our experiments (Table \ref{tab:bgpbt_hypers_casmo}).
We refer the readers to App. B.5 of \cite{casmopolitan} which examines the sensitivity of these hyperparameters.
Note that in our current instantiation, we use $\alpha_{V}=0$ so we only transfer policy networks across generations, since we found the value function was less informative.
We linearly anneal the coefficients for the supervised loss $\alpha_{V}$ and $\alpha_{\pi}$ from their original value to 0 over the course of the distillation phase.
This means we smoothly transition to a pure RL loss over the initial part of each new generation.

Our method is built using the PyTorch version of the \textsc{Brax}~\citep{brax} codebase at \url{https://github.com/google/brax/tree/main/brax}.
The codebase is open-sourced under the Apache 2.0 License.
The \textsc{Brax} environments are often subject to change, for full transparency, our evaluation is performed using the \textsc{0.10.0} version of the codebase.
We ran all our experiments on Nvidia Tesla V100 GPUs and used a single GPU for all experiments.
We note that the PPO baseline used in \cref{tab:main_results} is implemented in a different framework (JAX) to ours, which has some differences in network weight initialization.
The hyperparameters for the PPO baseline are tuned via grid-search on a reduced hyperparameter search space~\citep{brax}.
Since no hyperparameters were provided for the Hopper environment, we use the default in~\citet{brax}.  

For all experiments, we use $T_{\max} = 150M$, population size (number of parallel agents) $B = 8$ and $q = 12.5$ (percentage of the agents that are replaced at each \gls{PBT} iteration -- in this case, at each iteration, the single worst-performing agent is replaced).
For all environments except for Humanoid and Hopper, we use a fixed $t_{\mathrm{ready}} = 1M$.
To avoid excessive sensitivity to initialization, at the beginning of training for all \gls{PBT}-based methods (\gls{PB2}, \gls{PBT} and \our) we initialize with 24 agents and train for $t_{\mathrm{ready}}$ steps and choose the top-$B$ agents as the initializing population.
For the full \our, to trigger distillations and hence a new generation, we set a patience of 20 (i.e., if the evaluated return fails to improve after 20 consecutive $t_{\mathrm{ready}}$ steps, a new generation is started).
Since starting new generations can be desirable even if the training has not stalled, we introduce a second criterion to also start a new generation after 40M steps.
Thus, a new generation is started when either criterion is met (40M steps since last distillation, or 20 consecutive failures in improving the evaluated return).
For distillation at the start of every generation (all except initial), we begin distillation with 24 agents (4 suggested by \gls{BO} and the rest from random sampling, see App. \ref{appsubsec:new_archs} for details).
We then use successive halving to only distill $B$ of them using the full budget of 30M steps with the rest terminated early.

For the Humanoid and Hopper environments, we observed that PBT-style methods performed poorly across the board (See App. \ref{app:additional_experiments} for detailed results): in particular, on Hopper we notice that agents often learn a sub-optimal mode where it only learns to stand up (hence collecting the reward associating with simply surviving) but not to move.
On Humanoid, we find that agents often learn a mode where the humanoid does not use its knee joint -- in both cases, the agents seem to get stuck in stable but sub-optimal modes which use fewer degrees-of-freedom than they are capable of exploiting.
This behavior was ameliorated by linearly annealing the interval $t_{\mathrm{ready}}$ from 5M to 1M as a function of timesteps to not encourage myopic behavior at the start.
Since the increase in $t_{\mathrm{ready}}$ at the initial stage of training will lead to more exploratory behaviors, we increase the threshold before triggering a new generation at 60M for these two environments.

\section{Additional Experiments}
\label{app:additional_experiments}

\subsection{Scalability with Increased Training Budget and/or $B$}
\label{app:increased_timesteps}
\begin{wrapfigure}{R}{.35\textwidth}
\vspace{-12mm}
\centering{
\includegraphics[width=.95\linewidth]{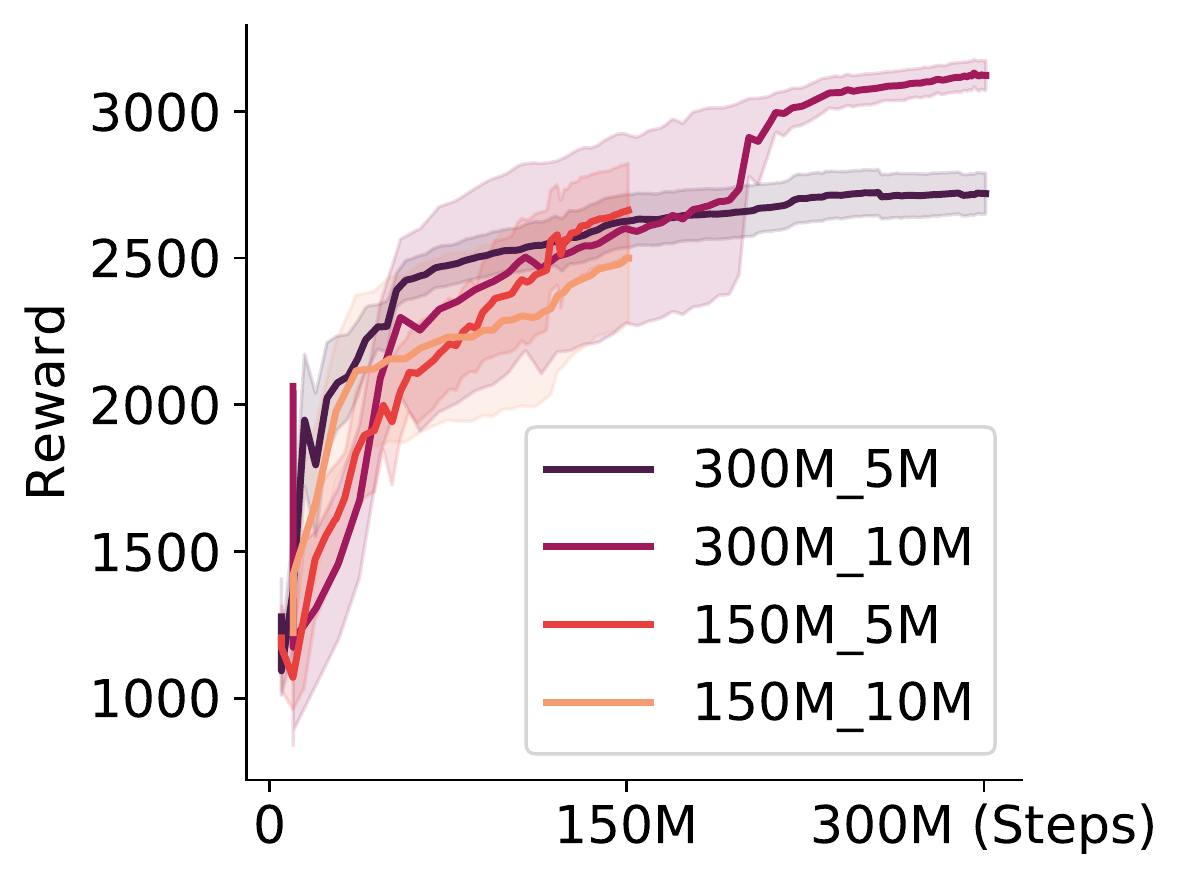}
}
\caption{\small{Evaluation of \our on the Hopper environment with $B=24$, and varying $t_{\max}$ and $t_{\mathrm{ready}}$. Results in the format ``\{$t_{\max}$\}\_\{$t_{\mathrm{ready}}$\}''.}}
\label{fig:hopper_longer_train}
\vspace{-8mm}
\end{wrapfigure}
For more complicated large-scale environments, \gls{PBT} is often used with a much larger population size than what we present in this paper ($B=8$).
In this section, we investigate whether \our benefits from increased parallelism by increasing the number of agents to $B=24$ and training for much longer.
In this instance, we use the Hopper environment as a testbed as it is amongst the most challenging in the current version of \textsc{Brax} and is particularly well suited to \gls{PBT}-style methods.
We show the results in Fig. \ref{fig:hopper_longer_train}: for a $t_{\max}$ of 150M, there is a small improvement over the $B=8$ results presented in the main text; whereas there is a significant benefit from jointly scaling up $B$, $t_{\max}$ and $t_{\mathrm{ready}}$ as exemplified by the \texttt{300M\_10M} result.

\subsection{Effects of Reduced Training Budget}
\label{app:decreased_timesteps}

\begin{figure}[t]
\centering
     \begin{subfigure}{1\linewidth}
          \includegraphics[width=\textwidth]{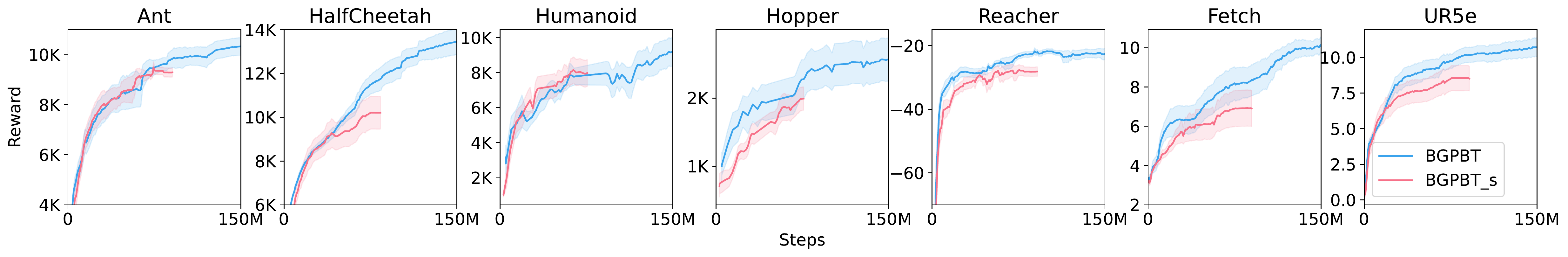}
     \end{subfigure}
     \vspace{-2mm}
    \vspace{-3.5mm}
\caption{
    \small{Mean evaluated return over the population with $\pm$1 \textsc{sem} (shaded) across 7 random seeds in all environments.
    }
}
\vspace{-4mm}
\label{fig:short_fig}
\end{figure}

We additionally conduct experiments on \our with the maximum timesteps roughly halved from the default $150$M used in the main experiments to remove the effect of the additional samples used during distillation.
We show the results in Fig. \ref{fig:short_fig}, where \gls{BG-PBT}\_{s} denotes \our run for roughly $75$M steps.
Compared to the training setup outlined in App. \ref{app:implementation_details}, to further reduce the training cost, we also reduce the number of initializing population to $12$, reduce the distillation timesteps to $20$M and allow for only one generation of distillation.
The results show that \our still performs well with results on par with or exceeding previous baselines using the full budget.

\subsection{Components of \our} 
\label{app:ablation}

\begin{figure}[h]
\vspace{-2mm}
\centering
     \begin{subfigure}{1\linewidth}
          \includegraphics[width=\textwidth]{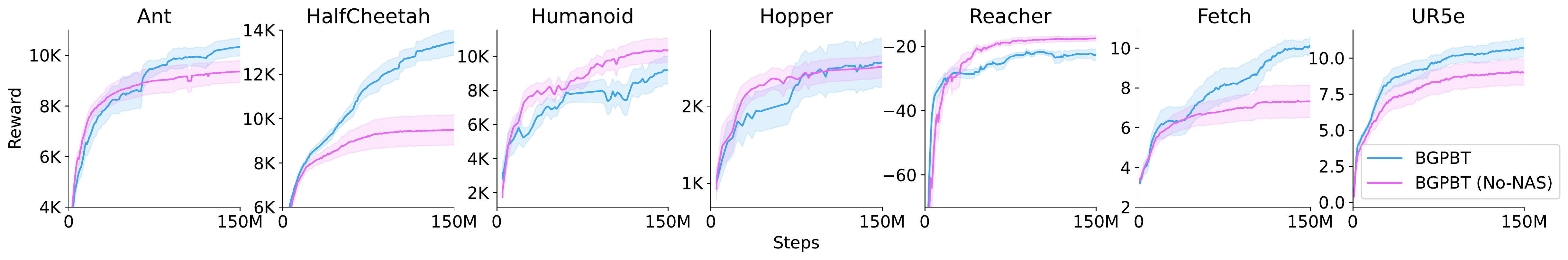}
     \end{subfigure}
     \vspace{-2mm}
    \vspace{-3.5mm}
\caption{
    \small{Comparison of full \our and \our without \gls{NAS} and distillation. Mean evaluated return over the population shown with $\pm$1 \textsc{sem} (shaded) across 7 random seeds in all environments.
    }
}
\label{fig:nonas_fig}
\end{figure}

We show the training trajectories of \our and the variant of \our without neural architecture search and distillation in Fig. \ref{fig:nonas_fig}. 
We also perform additional ablation studies on components of \our on Ant and HalfCheetah in Fig. \ref{fig:additional_ablation}.
The modifications to \our we consider are:
\begin{enumerate}
    \item \texttt{No NAS/TR} \gls{PBT} without \gls{TR}-based \gls{BO} or \gls{NAS}. This is identical to the \gls{PB2} baseline described in the main text.
    \item \texttt{No NAS} \our with \gls{TR}-based \gls{BO} in Sec. \ref{subsec:bo} but without \gls{NAS} or distillation.
    \item \texttt{Random Arch} \our with \gls{TR}-based \gls{BO} and distillation, but at the start of each generation, the architectures are selected randomly instead using \gls{BO}+\gls{RS} followed by the successive halving strategy described in App. \ref{appsubsec:new_archs}.
    \item \texttt{Static Arch} \our with \gls{TR}-based \gls{BO} and distillation, but without \gls{NAS}: all agents are started with the same default architectures for both the policy and value networks, and at the start of a new generation we distill across identical architectures.
\end{enumerate}
The results further demonstrate the benefit of introducing both \gls{TR}-based \gls{BO} and NAS to PBT-style methods in \our.
The results in Fig. \ref{fig:additional_ablation} also highlight the importance of having architecture diversity for distillation to be successful -- for both environments, removing the architecture variability (\texttt{Static Arch}) led to a significant drop in performance, which in some cases even under-performed the baseline without any distillation.
In contrast, simply initializing the new agents with random architectures performed surprisingly well (\texttt{Random Arch}).
This provides more even evidence that optimal architectures at different stages of training may vary, and thus should also vary dynamically through time.

\begin{figure}[b]
\centering
\vspace{-5mm}
\begin{subfigure}{0.5\textwidth}
\centering
     \includegraphics[width=\textwidth]{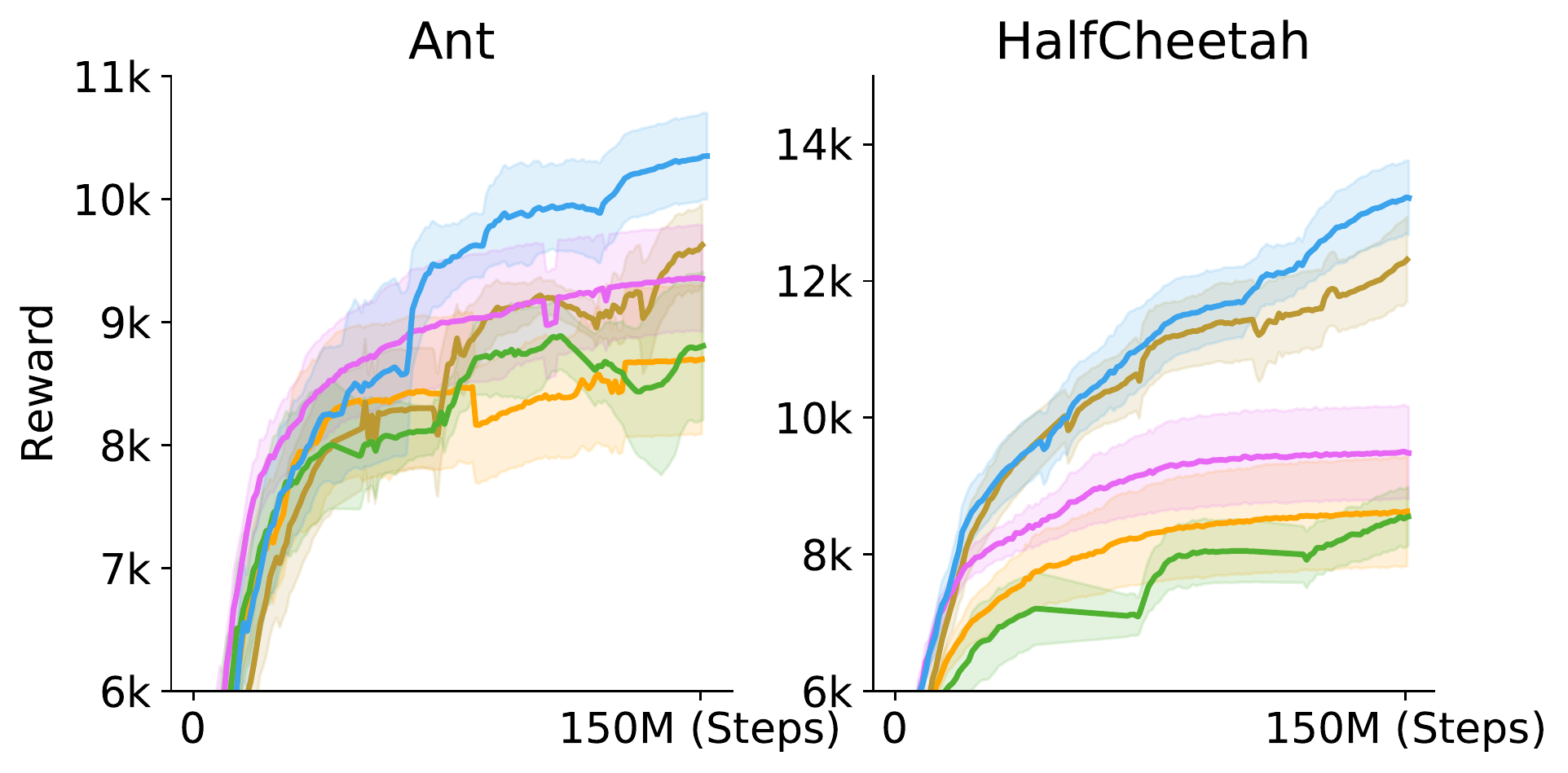}
\end{subfigure}
\begin{subfigure}{0.2\textwidth}
\centering
     \includegraphics[width=\textwidth]{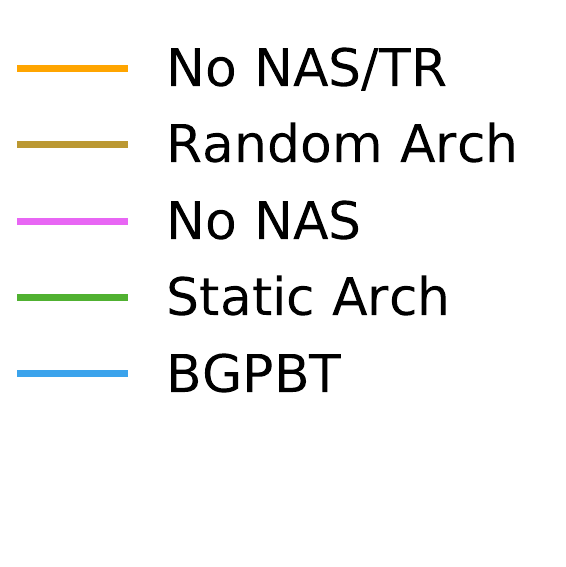}
\end{subfigure}
\caption{
    \small{Additional ablation studies on the Ant and HalfCheetah environments.}
}
\vspace{-5mm}
\label{fig:additional_ablation}
\end{figure}

\clearpage

\subsection{Hyperparameter and Architecture Schedules Learned on Additional Environments}
\label{app:more_schedule}
Supplementary to Fig. \ref{fig:ant_schedule}, we show the hyperparameter and architectures schedules learned by \our in Fig. \ref{fig:various_schedule} for additional environments where architecture search improves performance.
We see similar trends to the schedules learned for the Ant environment.

\begin{figure}[h]
\begin{subfigure}{1\textwidth}
\centering
     \includegraphics[width=\textwidth]{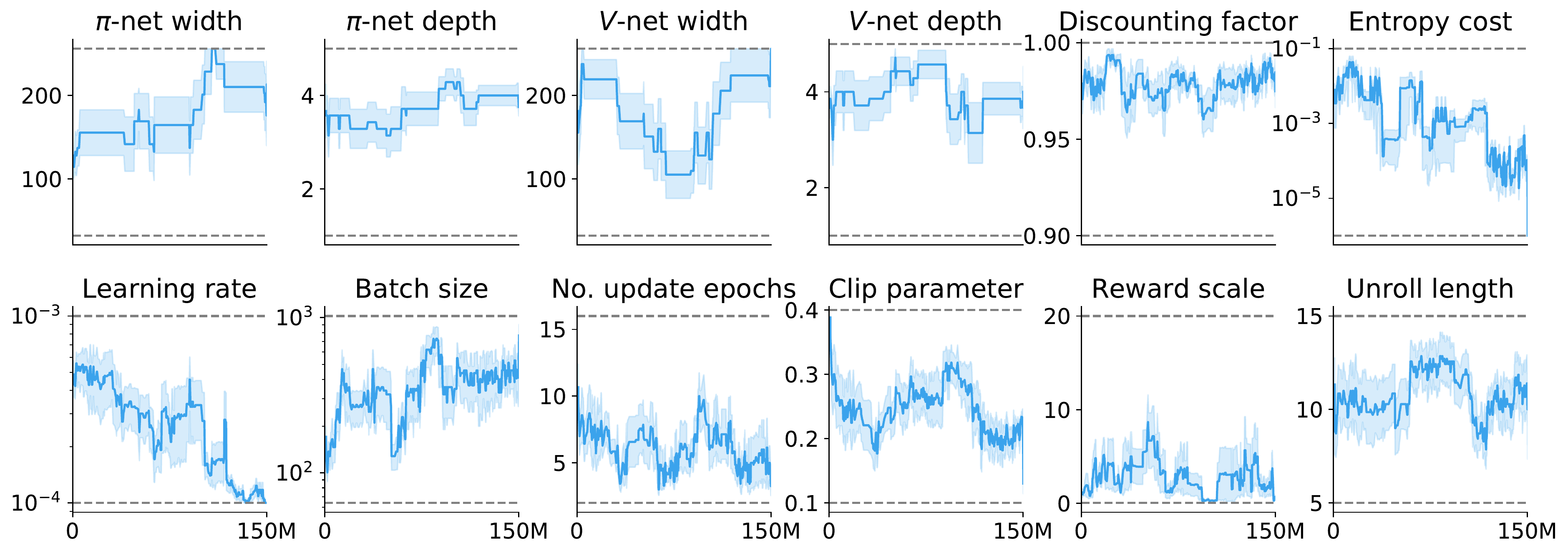}
     \caption{HalfCheetah}
\end{subfigure}
\begin{subfigure}{1\textwidth}
\centering
     \includegraphics[width=\textwidth]{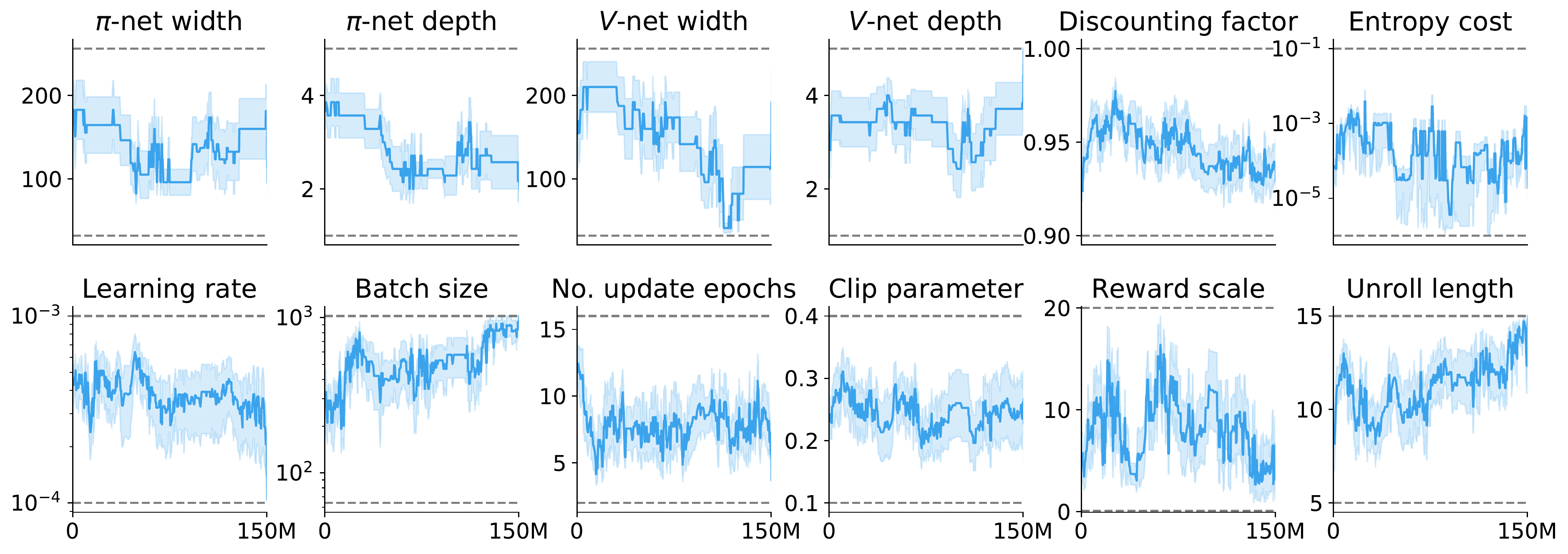}
     \caption{UR5e}
\end{subfigure}
\begin{subfigure}{1\textwidth}
\centering
     \includegraphics[width=\textwidth]{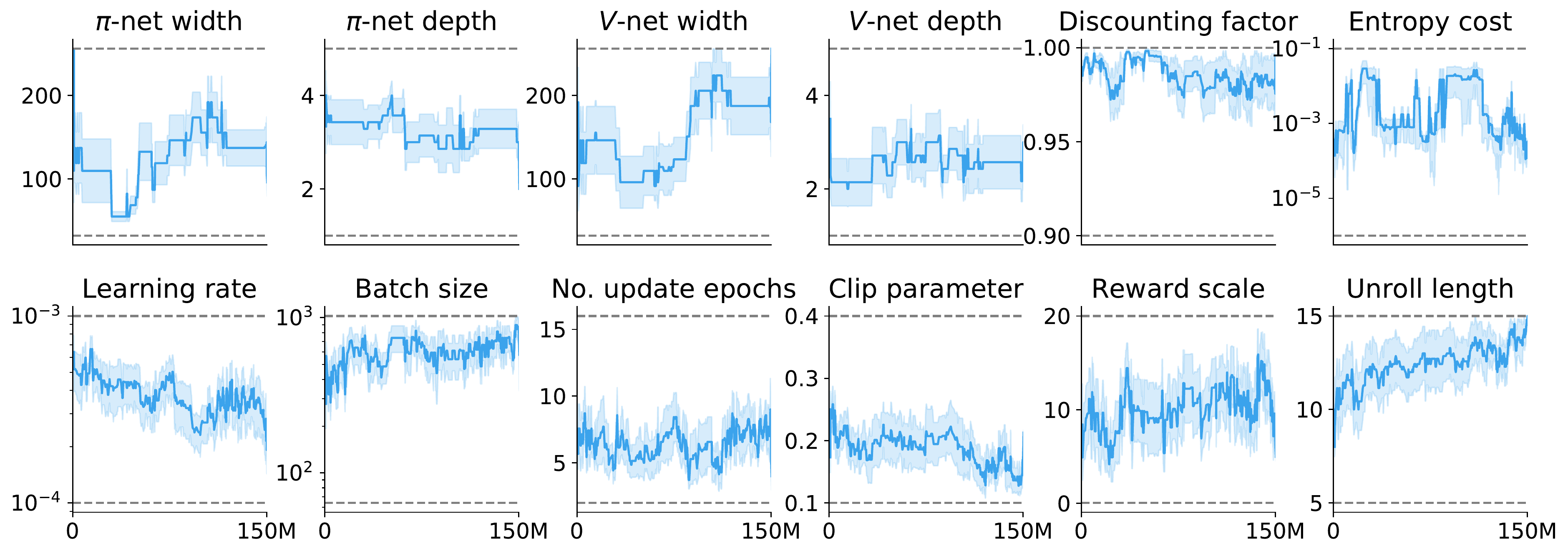}
     \caption{Fetch}
\end{subfigure}
\caption{
    \small{The hyperparameter and architecture schedule discovered by \our on various environments: we plot the hyperparameters of the best-performing agent in the population averaged across 7 seeds with $\pm$ 1 \textsc{sem} shaded. \textcolor{gray}{Gray dashed lines} denote the hyperparameter bounds.
    }
}
\label{fig:various_schedule}
\end{figure}

\clearpage

\subsection{Hopper and Humanoid Results with Constant $t_{\mathrm{ready}} = 1M$}
\label{app:hopper_constant}

For our main evaluation, we linearly anneal $t_{\mathrm{ready}}$ through time.
We show why this is necessary in Table~\ref{tab:hopper_humanoid_constant_tready} by evaluating the PBT-style methods with a constant $t_{\mathrm{ready}} = 1\textrm{M}$ as with the other environments.
We observe a considerable decrease in evaluated return compared to the results in \cref{tab:main_results}.

\begin{table}[h]
    \centering
        \caption{{Mean evaluated return $\pm 1$\textsc{sem} across 7 seeds shown for Humanoid and Hopper environments with constant $t_{\mathrm{ready}}=1\textrm{M}$.
        }
        }
        \begin{footnotesize}
    \adjustbox{max width=0.98\columnwidth}{
    \begin{tabular}{cccc}
    \toprule
     Method & PBT & PB2 & BG-PBT\\
     Search space & $\mathcal{Z}$ & $\mathcal{Z}$ & $\mathcal{J}$ \\
    \midrule
    Humanoid   & $\mathbf{7498_{\pm 666}}$ & $\mathbf{7667_{\pm 1000}}$ & $\mathbf{7949_{\pm 876}}$ \\
    Hopper  &  $1667_{\pm 222}$ & $1253_{\pm 77}$ &  $\mathbf{2257_{\pm 290}}$\\
    \bottomrule 
    \end{tabular}}
    \label{tab:hopper_humanoid_constant_tready}
    \end{footnotesize}
\end{table}

\subsection{Ablations on Explicit Treatment of Ordinal Variables}
\label{app:ablation_ordinal}

For \our, we extended Casmopolitan \citep{casmopolitan} by further accommodating for ordinal variables (i.e., discrete variables with ordering, such as integer variables for the width of a multi-layer perceptron). In this section, we show the empirical benefits of this over treating them as categorical variables (the default in \cite{casmopolitan}).

We consider 3 Brax environments (Fig. \ref{fig:categorical}), where we further compare the full \our with a variant of \our where we use the original Casmopolitan by treating the ordinal variables ($\log_2$-batch size, widths and depths of the value and policy MLPs, unroll length, number of updates per epoch) as categorical variables. It is clear that the ordinal treatment in our case results in better performance in all 3 environments.

\begin{figure}[h]
\centering
\begin{subfigure}{0.85\textwidth}
\centering
     \includegraphics[width=\textwidth]{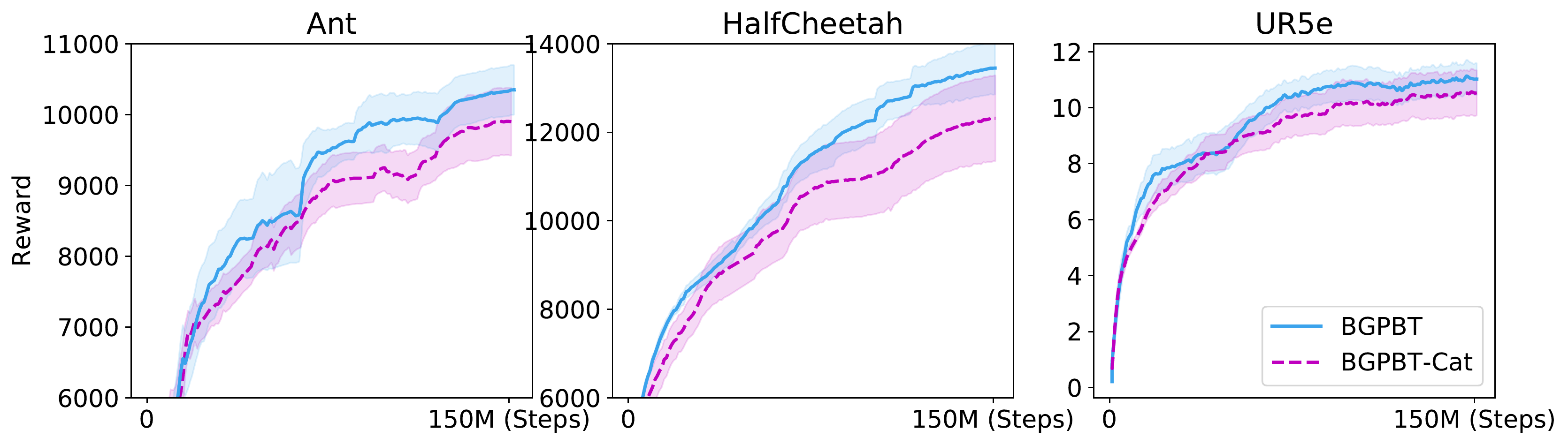}
\end{subfigure}
\caption{
    \small{Comparison of \our with the variant of \our that uses original Casmpolitan (\our-Cat) that treats ordinal variables as categorical.}
}
\label{fig:categorical}
\end{figure}

\subsection{Comparison Against Random Search with Manually Defined Learning Rate Schedule}
\label{app:comp_rs}

We compare against random search that uses an equivalent amount of compute resources, but with a manually defined learning rate schedule, in Table \ref{tab:rs_anneal}.
Specifically, we use random search for all hyperparameters in either the joint search space ($\mathcal{J}$) or the hyperparameter search space ($\mathcal{Z}$) defined in Table \ref{tab:main_results}, with the exception that instead of using flat learning rates, we search for an \emph{initial} learning rate which is cosine annealed to $10^{-8}$ by the end of the training.
We find that in Ant and HalfCheetah where the \our discovered schedules are similar to the manual cosine schedules, \texttt{RS-Anneal} significantly outperforms regular RS, but when the discovered schedules deviate from the manual design in the case of UR5e, we find the margin of improvement to be much smaller.
This shows that there is unlikely to be an optimal manual schedule for all environments, further demonstrating the desirable flexibility of \our in adapting to different environments.
Furthermore, in all environments, \our still outperforms RS, with or without manual learning rate scheduling.
This is particularly notable as previous \gls{PBT}-style methods were often known to under-perform random search, especially with small population sizes.

\begin{table}[tb]
    \centering
        \caption{Comparison against RS and RS with cosine annealing (RS-Anneal). The results for RS and \our are lifted from Table \ref{tab:main_results}.
        }
        \vspace{-3mm}
        \begin{footnotesize}
    \adjustbox{max width=0.98\columnwidth}{
    \begin{tabular}{cccccc}
    \toprule
     Method & RS & RS & RS-Anneal & RS-Anneal & BG-PBT\\
     Search space & $\mathcal{Z}$ & $\mathcal{J}$ & $\mathcal{Z}$ & $\mathcal{J}$  & $\mathcal{J}$\\
    \midrule
    Ant & $6780_{\pm 317}$ & $4781_{\pm 515}$ & $9640_{\pm 79}$ & $9536_{\pm 257}$ & $\mathbf{10349_{\pm 326}}$ \\
    HalfCheetah & $9502_{\pm 76}$ & $10340_{\pm 329}$ & $9672_{\pm 157}$ & $\mathbf{13071_{\pm 360}}$ & $\mathbf{13450_{\pm 551}}$ \\
    UR5e &  $5.3_{\pm 0.4}$ &  $6.9_{\pm 0.4}$ & $7.7_{\pm 0.3}$ & $7.8_{\pm 0.4}$ &$\mathbf{10.7_{\pm 0.6}}$\\
    \bottomrule 
    \end{tabular}}
    \label{tab:rs_anneal}
    \end{footnotesize}
\end{table}

\end{document}